\theoremstyle{plain}
\newtheorem{theorem}{Theorem}
\newcommand{\R}{\mathbb{R}}
\newcommand{\E}{\mathbb{E}}
\newcommand{\F}{\mathcal{F}}
\newcommand\copyrighttext{%
  \footnotesize \textcopyright 2022 IEEE. Personal use is permitted, but republication/redistribution requires IEEE permission. A version of this paper has been accepted for presentation at the International Joint Conference on Neural Networks (IJCNN) at IEEE WCCI 2022 and for publication in the conference proceedings published by IEEE.}
\newcommand\copyrightnotice{%
\begin{tikzpicture}[remember picture,overlay]
\node[anchor=south,yshift=10pt] at (current page.south) {\fbox{\parbox{\dimexpr\textwidth-\fboxsep-\fboxrule\relax}{\copyrighttext}}};
\end{tikzpicture}%
}
\title{Neural Network Compatible \\ Off-Policy Natural Actor-Critic Algorithm}
\author{Raghuram Bharadwaj Diddigi$^{*}$, Prateek Jain$^{*}$, Prabuchandran K.J., Shalabh Bhatnagar
\thanks{$^{*}$ Equal Contribution.}
\thanks{R. B. Diddigi and S. Bhatnagar are with the Department of Computer
Science and Automation, Indian Institute of Science, Bengaluru
India (e-mails: raghub@iisc.ac.in; shalabh@iisc.ac.in).}%
\thanks{Prateek Jain and Prabuchandran K.J. are with the Department of Computer
Science and Engineering, Indian Institute of Technology Dharwad, India (e-mails: 170010007@iitdh.ac.in; prabukj@iitdh.ac.in).}%
\thanks{Raghuram Bharadwaj was supported by a fellowship grant from the Centre for Networked Intelligence (a Cisco CSR initiative) of the Indian Institute of Science, Bangalore. Prabuchandran K.J. was supported by the Science and Engineering Board (SERB), Department of Science and Technology, Government of India for the startup research grant `SRG/2021/000048'. Shalabh Bhatnagar was supported by the J.C.Bose Fellowship, a project from DST under the ICPS Program and the RBCCPS, IISc.}
}
\begin{document}
\maketitle
\thispagestyle{empty}
\pagestyle{empty}
\copyrightnotice


\begin{abstract}
Learning optimal behavior from existing data is one of the most important problems in Reinforcement Learning (RL). This is known as ``off-policy control'' in RL where an agent's objective is to compute an optimal policy based on the data obtained from the given policy (known as the behavior policy). As the optimal policy can be very different from the behavior policy, learning optimal behavior is very hard in the ``off-policy'' setting compared to the ``on-policy'' setting where new data from the policy updates will be utilized in learning. This work proposes an off-policy natural actor-critic algorithm that utilizes state-action distribution correction for handling the off-policy behavior and the natural policy gradient for sample efficiency. The existing natural gradient-based actor-critic algorithms with convergence guarantees require fixed features for approximating both policy and value functions. This often leads to sub-optimal learning in many RL applications. On the other hand, our proposed algorithm utilizes compatible features that enable one to use arbitrary neural networks to approximate the policy and the value function and guarantee convergence to a locally optimal policy.  We illustrate the benefit of the proposed off-policy natural gradient algorithm by comparing it with the vanilla gradient actor-critic algorithm on benchmark RL tasks.

\end{abstract}



\section{Introduction}\label{intro}

Reinforcement Learning (RL) \cite{bertsekas1996neuro,sutton1998introduction} deals with learning algorithms for an agent to learn optimal behavior based on the feedback received from an unknown environment. 
In a standard RL setting, the agent learns a sequence of actions, executes these actions, receives feedback from the environment, and improves its ability to take right actions. This cycle continues until a best sequence of actions is learnt. Note that, in each cycle, the agent gets a feedback from the environment for its current choice of actions. In many real-life scenarios like medical research \cite{liu2020off}, this paradigm is not desired. For example, consider training a RL model to learn the best course of treatment for combating a disease. In this application, it is dangerous to try exploratory actions as done in the standard RL paradigm. The most appropriate paradigm here would be to train the RL algorithm from the past patient records where certain treatments have already been implemented. Of course, the efficacy of the trained model depends on the exhaustiveness of the data, but this approach is safer to learn a basic model that can be used as a guide for a doctor. This is referred as the off-policy setting in the RL literature. 

Two sub-problems are studied under the off-policy setting - prediction and control. Off-policy prediction problem \cite{liu2018breaking, sutton2009fast,sutton2016emphatic,bhatnagar2009convergent,maei2010gq,gelada2019off} refers to computing the value function of a given policy \cite{sutton1998introduction} (known as the target policy) from the data obtained from another policy (known as the behavior policy). On the other hand, the goal in the off-policy control problem is to compute optimal policy from the data obtained from the behavior policy. In this work, we consider the off-policy control problem. We first discuss the on-policy setting and subsequently discuss the off-policy setting. 

Standard actor-critic algorithms \cite{konda2000actor,sutton1999policy,mnih2016asynchronous} are a popular class of on-policy RL algorithms for computing an optimal policy. In this paradigm, the actor improves the policy based on the feedback from the critic using the policy gradient rule. The gradient updates in these algorithms are highly sensitive to the learning rates, and it is often difficult to \textit{a priori} know the right learning rate schedule for an efficient update of parameters. This forces one to perform an extensive hyper-parameter search to arrive at a good solution. To alleviate this problem, natural gradients have been studied in the literature \cite{kakade2001natural,peters2008natural}. 

The idea in natural gradients is to move in the right direction in the distribution space induced by the function parameters rather than in the parameter space. As a result, the natural gradient algorithms are not sensitive to model reparameterization. Moreover, these algorithms are shown to be more efficient and faster compared to standard gradient algorithms \cite{zhang2019fast,peters2008natural}. The actor-critic algorithms that make use of natural gradients are commonly referred to as natural actor-critic algorithms \cite{kakade2001natural,peters2003reinforcement,peters2008natural,bhatnagar2009natural}. Here, the actor uses natural gradients instead of normal gradients in the update rule of policy parameters. In \cite{iwaki2019implicit}, an incremental on-policy natural actor-critic algorithm has been proposed and is shown to improve the stability of the iterates. The trust region optimization techniques for RL like the TRPO \cite{schulman2015trust} and the ACKTR \cite{wu2017scalable} use natural policy gradient in their updates. However, they explicitly compute/approximate a distance metric using the Fisher information matrix (curvature information) giving rise to complex algorithms. This estimation/approximation of the Fisher information matrix can be entirely avoided (see equation \eqref{eq12} of Section \ref{s3}) if the critic employs a linear function approximator with well-defined features known as compatible features \cite{konda2000actor}. 

The idea of using compatible features in critics has been first proposed in \cite{konda2000actor,sutton1999policy} in the context of standard policy gradient algorithms. Actor-critic algorithms using compatible features in critics also enjoy nice convergence properties. However, the use of compatible features forces the critic to employ only the linear function approximation architecture. This constraints the computational capability of critic to accurately estimate the feedback needed for the actor which becomes a bottleneck, especially in environments with large/continuous state spaces. Further, utilising compatible features along with neural network  based implementations of actor-critic algorithms is a challenging task. 
Therefore, typically in practical implementations, actor and critic use different neural networks to approximate the policy and value function which might lead to undesired behaviors. The above mentioned limitations have 
restricted the development of successful neural network compatible natural actor-critic algorithms. For ease of exposition, we refer neural network compatible natural actor-critic algorithms as deep natural actor-critic algorithms. 

In our work, we alleviate aforementioned problems by constructing our algorithm with three components - (1) an actor that deploys a deep neural network for approximating the policy, (2) an advantage critic that uses linear function approximation with compatible features for estimating the advantage value function \cite{sutton1998introduction} (which we refer to as neural network compatible features), and (3) a value function critic that deploys deep neural network for approximating the value function. To the best of our knowledge, ours is the first work that allows compatible features to be used in conjunction with the neural network approximations of policy and value function. Our first contribution in this work is a ``Deep Natural Actor-Critic'' with compatible features and its comparative study with the standard deep actor-critic algorithm. Note that our algorithm implements natural policy gradient in a simple manner while utilizing the power of neural networks.

We now describe the ``off-policy'' problem. As mentioned earlier, this refers to the setting where the training data comes from a policy (known as behavior policy) which can be very different from the optimal policy. As a result, the state-action distribution between the Markov Chain induced by behavior policy and the learning policy will be different. This mismatch in distributions is what makes the off-policy training unstable. 
In \cite{degris2012off}, the first off-policy actor-critic has been proposed and is shown to converge under tabular setting. Since then, many off-policy actor-critic algorithms have been proposed mainly under two objective criteria: Excursion and Alternate Life \cite{zhang2019generalized}. Under the excursion objective criterion, the performance of a policy is evaluated under the expectation with respect to the stationary distribution of the behavior policy. The algorithms proposed in \cite{degris2012off,zhang2020provably,imani2018off} fall under this category. However, it is shown in \cite{zhang2019generalized} that this setting can be misleading about the performance of optimal policy. Under alternate life objective criterion, the performance of a policy is evaluated with respect to the stationary distribution of the learning policy itself. This seems to be the most natural setting for off-policy control. The algorithms proposed in \cite{gelada2019off,liu2020off,xu2021doubly} fall under this category. 

In \cite{zhang2019generalized}, a unified objective combining both excursion and after-life objectives has been considered and is shown to provide better performance. Recently, in \cite{khodadadian2021finite}, an off-policy version of natural actor-critic has been proposed under tabular setting and finite sample analysis is provided. In \cite{liu2020off}, a standard actor-critic algorithm has been proposed for solving the off-policy control problem. The actor in the algorithm of \cite{liu2020off} corrects for the discrepancy in state and action distribution between target and behavior policy while updating the policy parameters. However a similar state distribution correction is not performed for the value function update, which might lead to divergence \cite{sutton1998introduction}.
Our second contribution is an off-policy natural actor-critic algorithm under the alternate life objective criterion which corrects the state-action mismatch in actor and critic updates. To the best of our knowledge, ours is the first off-policy natural actor-critic algorithm that utilizes non-linear or deep function approximation. We now summarize the main contributions of our work as follows:
\begin{enumerate}
    \item We propose a new deep actor-critic algorithm under the on-policy setting with two critics and an actor. Here, the actor employs a natural policy gradient, the advantage critic uses compatible features, and the value critic uses non-linear function approximation. 
    \item We then extend this algorithm to the off-policy setting by appropriately correcting the state-action distribution.
    \item We successfully integrate compatible features in the implementation of our proposed algorithms and demonstrate their advantages through comparisons with standard actor-critic algorithm on benchmark RL tasks.
\end{enumerate}

Finally, it is important to observe the distinction between the off-policy setting that we consider in this paper, and the actor-critic algorithms like Soft Actor-Critic (SAC) \cite{haarnoja2018soft} and Actor-Critic with Experience Replay (ACER) \cite{wang2016sample} that make use of experience replay. Although the data from the replay buffer is technically off-policy, the data available to these algorithms is online and comes from previous iterations \cite[Section E, Appendix]{liu2020off}. In contrast, the data available to our algorithm comes from an entirely different (behavior) policy. This is a more challenging setting due to the instability issues that arise in this case. 

\section{Problem Formulation}
We consider an infinite-horizon MDP defined by the tuple \\ $(S,A,P,r,\gamma,d_0)$, where $S$ denotes the state space, $A$ denotes the action space (here we assume that $|A| < \infty$), $P$ is the transition probability rule where $P(s'|s,a)$ denotes the probability of transitioning to state $s'$ when action $a$ is taken in state $s$, $r$ is the single-stage reward function where $r(s,a)$ denotes the reward obtained by the agent from the environment $\mathcal{E}$ when action $a$ is taken in state $s$, $0 < \gamma < 1$ denotes the discount factor and $d_0$ denotes the initial distribution over states. We define a policy $\pi: S \xrightarrow{} \Delta(A)$\footnote{Here $\Delta$ denotes the simplex whose dimension is the cardinality of actions.} as a mapping from state space to distribution over actions. For a given policy $\pi$, we define $J(\pi)$, the discounted long-run reward as follows: 
\begin{align}\label{eq1}
    J(\pi) = (1-\gamma) ~ \E \Bigg[\sum_{t=0}^{\infty} \gamma^t r(s_t,a_t) \Big{|} s_0 \sim d_0 \Bigg],
\end{align}
where the expectation is taken over the states and actions encountered at time $t = 0,1,2,\ldots,\infty$, with $a_t \sim \pi(s_t,.)$. The objective of an RL agent is to compute a policy $\pi^*$ such that,
\begin{align}\label{pidef}
    \pi^* = \arg \max_{\pi \in \Pi} J(\pi),
\end{align}
where $\Pi$ is the set of all possible polices. 

To obtain $\pi^*$, the RL agent typically starts with an initial policy and iteratively finds improved policies. If the agent executes the current learned policy at every stage, we refer to such a setting as ``on-policy control''. As discussed earlier, this may not be a favorable setting in many realistic scenarios. Instead, the agent would have to rely on the data generated by a different policy (known as behavior policy) at every stage of policy improvement. This setting is referred to as ``off-policy control''. Let $\mu$ denote the behavior policy using which the data (trajectories/samples) gets generated. An example of such a policy would be to one that takes all actions with equal probability in all the states. 

Note that the set of all stochastic policies $\Pi$ is uncountably infinite which renders the maximization in \eqref{pidef} intractable. This forces one to resort to function approximation techniques \cite{sutton1998introduction}. One popular way of doing this is by parameterizing the policy $\pi$. Let $\tilde \Pi = \{\pi_{\theta}~|~ \theta \in \R^k, ~ k >0 \}$ be a family of policies parameterized by $\theta$. For example, a soft-max policy parameterization $\forall s,a$ is described as:
\begin{align}
    \pi_{\theta}(s,a) = \frac{\exp{(f(\theta,s,a)})}{ \displaystyle \sum_{b \in A}\exp{(f(\theta,s,b)})},
\end{align}
where $f(\theta,s,a)$ is a linear or non-linear function over parameters $\theta$.

The objective of an agent in such a case is to compute an optimal $\theta^* = \displaystyle \arg \max_{\theta} J(\pi_\theta)$. Note that unlike in \eqref{pidef} where the optimization is over all possible stochastic policies, the optimization here is only over the parameterized policy space $\tilde \Pi$.
This can be done by performing gradient ascent as follows: consider an initial $\theta_0$ and update the parameters as:
\begin{align}\label{eq4}
    \theta_{t+1} = \theta_t + \beta_t \nabla_{\theta}J(\theta_t), ~ \forall t \geq 0,
\end{align}
where $\beta_t, ~ t \geq 0$ is the step-size sequence. Therefore, the problem of computing an optimal $\theta^*$\footnote{Note that the gradient ascent algorithm guarantees convergence only to the local optimal solution.} reduces to efficiently estimating the gradient $\nabla_\theta J(\pi_\theta)$ from the available data. We first discuss how this is done in the on-policy setting, i.e., the setting where the data is generated from the same policy $\pi_\theta$ itself.  
\section{On-Policy Actor-Critic Algorithms:} \label{s3}
A significant result in RL is the policy gradient theorem \cite{sutton1998introduction} that gives the following closed form expression for $\nabla_{\theta} J(\theta)$:
\begin{align}\label{eq2}
    \nabla_{\theta} J(\theta) = \displaystyle \E_{s\sim \Tilde{d}_{\pi_\theta}, a \sim \pi_{\theta}}[\nabla_{\theta} \log \pi_{\theta}(s,a)A^{\pi_{\theta}}(s,a)],
\end{align}
where $\displaystyle \Tilde{d}_{\pi}(s) = (1-\gamma)~ \sum_{t=0}^{\infty} \gamma^t \mathbb{P}\{s_t =s \mid s_0 \sim d_0\}$ is the discounted visitation distribution,   $A^{\pi_{\theta}}(s,a) = Q^{\pi_{\theta}}(s,a) - V^{\pi_{\theta}}(s)$ is the advantage value function, $Q^{\pi_{\theta}}(s,a)$ \cite{sutton1998introduction}  is the Q-value corresponding to state $s \in S$ and action $a \in A$, and $V^{\pi_{\theta}}(s)$ \cite{sutton1998introduction} is the value function of policy $\pi_{\theta}$ corresponding to state $s \in S$. Note that in \eqref{eq2}, we only require the projection of advantage function onto $\nabla_{\theta_i} \log \pi_{\theta}(s,a), \forall i \in \{1,2,\ldots,k\}$ rather than the advantage function itself for computing the gradient. Therefore, in \cite{sutton1999policy,konda2000actor} a linear function approximation for $A^{\pi_{\theta}}(s,a)$ has been considered, i.e.,
\begin{align}\label{eq3}
    A^{\pi_{\theta}}(s,a) \approx x^T \nabla_{\theta} \log \pi_{\theta}(s,a),
\end{align}
where $\nabla_{\theta} \log \pi_{\theta}(s,a)$ is the compatible feature \cite{bhatnagar2009natural} corresponding to state $s$ and action $a$. If $x^*$ satisfies the optimality condition for the projection (see \cite{sutton1999policy,konda2000actor}):
\begin{align}
    \E_{s\sim \Tilde{d}_{\pi_\theta}, a \sim \pi_{\theta}}[(A^{\pi_{\theta}}(s,a) -& {x^*}^T \nabla_{\theta} \nonumber \log \pi_{\theta}(s,a)) \\ &\nabla_{\theta}\log \pi_{\theta}(s,a) ]= 0,
\end{align}
then the policy gradient \eqref{eq2} can be rewritten as:
\begin{align}\label{eq5}
    \nabla_{\theta} J(\theta) = F(\theta) x^*,
\end{align}
where 
\begin{align}
    F(\theta) = \displaystyle \E_{s\sim \Tilde{d}_{\pi_\theta}, a \sim \pi_{\theta}}[\nabla_{\theta} \log \pi_{\theta}(s,a) \nabla_{\theta} \log \pi_{\theta}(s,a)^{T}].
\end{align}
Actor-critic algorithms \cite{sutton1998introduction,prabuchandran2016actor} are a popular class of RL algorithms that utilise the policy gradient theorem to compute the optimal policy $\pi_{\theta^*}$. Here, the critic estimates the advantage value function $(x)$ parameters and the actor uses it to improves the policy parameters ($\theta$).  

The policy gradient update rule in \eqref{eq4} does not take into account the curvature information of the objective function. This makes the policy parameter updates sensitive to the learning rates. To tackle this problem natural policy gradients have been considered in the literature \cite{peters2003reinforcement,peters2008natural,bhatnagar2009natural}. Here, the standard gradient $\nabla_{\theta}J(\theta)$ is replaced by the natural gradient:
\begin{align}\label{eq6}
     G^{-1}(\theta)\nabla_{\theta}J(\theta),
\end{align}
where $G(\theta)$ is the Fisher information matrix that encodes curvature information \cite{amari1998natural}.

An important result pertaining to natural policy gradients comes from \cite[Section 3]{peters2003reinforcement}, where it is shown that $G(\theta) = F(\theta)$ for the infinite horizon discounted problem. Therefore, from \eqref{eq5} and \eqref{eq6}, the natural gradient turns out to be optimal critic parameters $x^*$, i.e., 
\begin{align}\label{eq12}
    G^{-1}({\theta})\nabla_\theta J(\theta) = G^{-1}({\theta})F(\theta)x^* = x^*.
\end{align}

We are now ready to propose our on-policy algorithm ``Deep Natural Actor-Critic'' (cf. Algorithm \ref{alg:deepNAC}), that utilizes natural policy gradients.  
The actor update in the proposed algorithm is performed as: 
\begin{align}\label{eq11}
    \theta_{t+1} &= \theta_t + \beta_t G^{-1}(\theta_t)\nabla_{\theta}J(\theta_t) = \theta_t + \alpha_t x_t, ~ \forall t \geq 0,
\end{align}
where $x_t$ satisfies:
\begin{align}\label{eq10}
    \E_{s\sim \Tilde{d}_{\pi_{\theta_t}}, a \sim \pi_{\theta_t}}[(A^{\pi_{\theta_t}}(s,a) &- x_t^T \nabla_{\theta} \log \pi_{\theta_t}(s,a)) \nonumber \\ &\nabla_{\theta} \log \pi_{\theta_t}(s,a) ]= 0.
\end{align}

In order to estimate $x_t$ satisfying \eqref{eq10}, the Algorithm \ref{alg:deepNAC} employs two critics - an advantage critic and a value function critic. The advantage critic solves \eqref{eq10} iteratively (step 12 of Algorithm \ref{alg:deepNAC}) by approximating \footnote{Note that the exact computation of $A^{\pi_{\theta_t}}$ requires computing expectation over all possible next states. Further, the $V^{\pi_{\theta_t}}(.)$ obtained from the critic is only an approximation to the true value function.} $A^{\pi_{\theta_t}}(s,a)$ as $r(s,a) + \gamma V^{\pi_{\theta_t}}(s') - V^{\pi_{\theta_t}}(s)$. The value function critic then employs deep neural network parameterised by $\psi$ to estimate the value function $V^{\pi_{\theta_t}}$ of the policy $\pi_{\theta_t}$. In this work, we utilize Temporal Difference (TD(0)) prediction \footnote{In general, one could also use TD with eligibility traces, i.e., TD($\lambda$), in prediction for trading bias with variance. For simplicity of exposition, we consider TD(0) prediction here.} \cite{sutton1998introduction}, a simple and effective technique to learn the value function (step 10 of Algorithm \ref{alg:deepNAC}). Finally, using the advantage critic weights $x$, the policy parameters $\theta$ is updated according to \eqref{eq11} (step 14 of Algorithm \ref{alg:deepNAC}).

Note that $\nabla_{\theta}\log \pi_{\theta}(s,a)$ is a matrix of neural network weights (whose size depends on the number of layers and neurons in each layer). Let $k$ be total number of neural network weights in the policy network. In step 11 of Algorithm \ref{alg:deepNAC}, we flatten this matrix into a vector that becomes feature vector for the advantage value function estimation (step 12). Finally, we reshape the advantage value function parameters (step 13) to the matrix form for updating the policy network parameters (step 14).   

\begin{algorithm}[h!]
\caption{On-Policy Deep Natural Actor-Critic (Deep NAC)}\label{alg:deepNAC}
\begin{algorithmic}[1]
\State Initialize the policy network parameter $\theta$. 
\State Initialize the value Function network parameter $\psi$. 
\State Initialize the advantage value function parameter $x \in \R^k$. 
\State Select step-size sequences $\alpha_n, \beta_n, ~ \forall n$.
\For{$n = 0,\ldots,\infty$}
\State Initialise $s \sim d_0(.)$
\While{the trajectory has not terminated}
\State Obtain an action $a \sim \pi_\theta(s,.)$.
\State Obtain next state $s'$ and reward $r$ from the environment.
\State $\psi \xleftarrow{} \psi + \alpha_n \big(r+ \gamma V_{\psi}(s') - V_{\psi}(s)\big)\nabla_{\psi}V_{\psi}(s)$
\State $f \xleftarrow{}$ Flatten matrix $\nabla_{\theta}\log \pi_{\theta}(s,a)$ in to a $k \times 1$ vector. 

\State $x \xleftarrow{} x + \alpha_n \big(r + \gamma V_{\psi}(s') - V_{\psi}(s) - x^Tf\big)f$
\State $M \xleftarrow{}$ Reshape $x$ to the neural network weight matrix. 
\State $\theta \xleftarrow{} \theta + \beta_n M$
\State $s \xleftarrow{} s'$
\EndWhile
\EndFor
\end{algorithmic}
\end{algorithm}

\section{Extension to the Off-Policy Setting:}\label{sec4}
Unlike the on-policy setting, the data in ``off-policy control'' is obtained from a given behavior policy $\mu$. Note that the data in this setting is obtained in the form of samples $(s,a,s')$, where $s \sim \Tilde{d}_\mu(.)$, $a \sim \mu(s,.)$ and $s' \sim p(.|s,a)$. 

Now the policy gradient \eqref{eq2} can be rewritten as:
\begin{align}\label{eq7}
    \nabla_{\theta} J(\theta) &= \displaystyle \E_{s\sim \Tilde{d}_{\pi_\theta}, a \sim \pi_{\theta}}[\nabla_{\theta} \log \pi_{\theta}(s,a)A^{\pi_{\theta}}(s,a)] \\ \nonumber
    &= \displaystyle \sum_{s,a} \Tilde{d}_{\pi_\theta}(s) \pi_{\theta}(s,a) \nabla_{\theta} \log \pi_{\theta}(s,a)A^{\pi_{\theta}}(s,a) \\ \nonumber
    &= \displaystyle \sum_{s,a} \Tilde{d}_{\mu}(s) \frac{\Tilde{d}_{\pi_\theta}(s)}{\Tilde{d}_{\mu}(s)} \mu(s,a) \frac{\pi_\theta(s,a)}{\mu(s,a)}  \nabla_{\theta} \log \pi_{\theta}(s,a)A^{\pi_{\theta}}(s,a) \\
    &= \displaystyle \E_{s\sim \Tilde{d}_{\mu}, a \sim \mu}[w_\theta(s) \rho_\theta(s,a) \nabla_{\theta} \log \pi_{\theta}(s,a)A^{\pi_{\theta}}(s,a)] \label{new-eq7}, 
\end{align}
where $w_\theta(s) = \frac{\Tilde{d}_{\pi_\theta}(s)}{\Tilde{d}_{\mu}(s)}$ and $\rho_\theta(s,a) = \frac{\pi_\theta(s,a)}{\mu(s,a)}$ are the importance sampling ratios. It is important to note the crucial difference between \eqref{eq7} and \eqref{new-eq7}. While the expectation in \eqref{eq7} is over the state, action samples from the policy $\pi_\theta$ itself, the expectation in \eqref{new-eq7} is over the behaviour policy $\mu$. This modification allows one to use the state, action samples from the behavior policy during the training of an off-policy actor-critic algorithm. 

While estimating $\rho_{\theta}(s,a)$ is straightforward, estimating the ratio $ w_{\theta_t}(s_t)$ from data samples is non-trivial. To estimate $w_{\theta_t}(s_t)$ from the samples we employ the iterative algorithm proposed in \cite{liu2018breaking}. 
Please refer to the supplementary material for a discussion on this estimation procedure.

We now briefly discuss the off-policy actor-critic algorithm proposed in \cite{liu2020off}. Utilising \eqref{new-eq7} the policy parameters $\theta_t, ~ t \geq 1$ are updated using batch stochastic gradient ascent as:
\begin{align}
    \theta_{t+1} = \theta_t + \beta_t \displaystyle \sum_{s \in \mathcal{B}} \frac{w_{\theta_t}(s)}{z} \rho_{\theta_t}(s,a) \nabla_{\theta} \log \pi_{\theta_t}(s,a)Q^{\pi_{\theta_t}}(s,a),
\end{align}
where $\mathcal{B}$ is the batch of state, action samples derived from policy $\mu$ at time $t$, and \\$z = \frac{1}{|\mathcal{B}|}\sum_{s \in \mathcal{B}}w_{\theta_t}(s)$. They estimate $Q^{\pi_{\theta_t}}$ using non-linear TD($\lambda$) update. The stationary distribution of the Markov chain $(s_t,a_t), ~ t \geq 0$  under the behavior policy $\mu$ is different from the stationary distribution of the Markov chain under the current policy $\pi_{\theta_t}$. This difference needs to be corrected while estimating the value function of the current policy. 
However, their estimation procedure does not correct for the discrepancy in the state distribution (step 14 of Algorithm 1 of \cite{liu2020off}).

In our work, we make this important correction in the value function estimation procedure and utilize it to estimate the advantage value function that uses compatible features. We finally use the advantage critic weights for computing the natural policy gradient. 




\section{Proposed Algorithm}\label{prop}
\begin{algorithm}[h!]
\caption{Off-Policy Deep Natural Actor-Critic (Deep OffNAC)}\label{alg:NAC}
\begin{algorithmic}[1]
\State $\mu:$ Behavior policy. 
\State Initialize the policy network parameter $\theta$. 
\\State Initialize the value function network parameter $\psi$. 
\State Initialize the advantage value function parameter $x \in \R^k$. 
\For{$n = 0,\ldots,\infty$}
\State Initialize $s \sim d_0(.)$
\While{the trajectory has not terminated}
\State Estimate $\hat{w}_\theta$ and $w_{\theta}$ using Algorithms 1 and 2, \hspace*{0.9 cm} respectively, of \cite{liu2018breaking}.
\State Obtain an action $a \sim \mu(s,.)$.
\State Obtain next state $s'$ and reward $r$ from the environment.
\State Set $\rho(s,a) = \frac{\pi_{\theta}(s,a)}{\mu(s,a)}$.
\State $\psi \xleftarrow{} \psi + \alpha_n \Big( \hat{w}_{\theta}(s)\rho(s,a)\big(r+ \gamma V_{\psi}(s') - V_{\psi}(s)\big)\nabla_{\psi}V_{\psi}(s)\Big)$
\State $f \xleftarrow{}$ Flatten matrix $\nabla_{\theta}\log \pi_{\theta}(s,a)$ in to a $k \times 1$ vector. 
\State $x \xleftarrow{} x + \alpha_n \Big(w_{\theta}(s) \rho(s,a) \big(r + \gamma V_{\psi}(s') - V_{\psi}(s) - x^Tf\big)f\Big) $
\State $M \xleftarrow{}$ Reshape $x$ to the neural network weight matrix.
\State $\theta \xleftarrow{} \theta + \beta_n M$
\State $s \xleftarrow{} s'$
\EndWhile
\EndFor
\end{algorithmic}

\end{algorithm}
In this section, we alleviate the issue discussed above (see section \ref{sec4}) and propose our on-line convergent off-policy natural actor-critic algorithm. In our algorithm, there are two neural networks and one linear network: a policy neural network that updates the policy parameters $\theta$, a value function neural network parameterized by $\psi$ that estimates the value function corresponding to the policy parameters $\theta$ and a linear advantage value critic that uses compatible features $\nabla_\theta \log \pi_\theta(s,a)$  from the actor to compute the natural policy gradient. Recall that $w_\theta(s) = \frac{\Tilde{d}_{\pi_\theta}(s)}{\Tilde{d}_{\mu}(s)}$ and $\rho_\theta(s,a) = \frac{\pi_\theta(s,a)}{\mu(s,a)}$. Let $\hat{w}_\theta(s) = \frac{d_{\pi_\theta}(s)}{d_{\mu}(s)}$ denote the ratio of stationary distributions of the Markov chain $s_t, ~ t \geq 0$ following policies $\pi_\theta$ and $\mu$ respectively. We assume here that the Markov Chains under $\mu$ and $\pi_\theta, ~ \forall \theta,$ have unique stationary distributions $d_\mu$ and $d_{\pi_\theta},$ respectively.
The pseudo-code of our proposed algorithm is given in Algorithm \ref{alg:NAC}. 
The input to our algorithm is the data samples from the behavior policy. Let the current policy parameters be $\theta$. There are three steps involved in the update of policy parameters at each iteration. 
\begin{figure*}
    \centering
    \includegraphics[scale = 0.18]{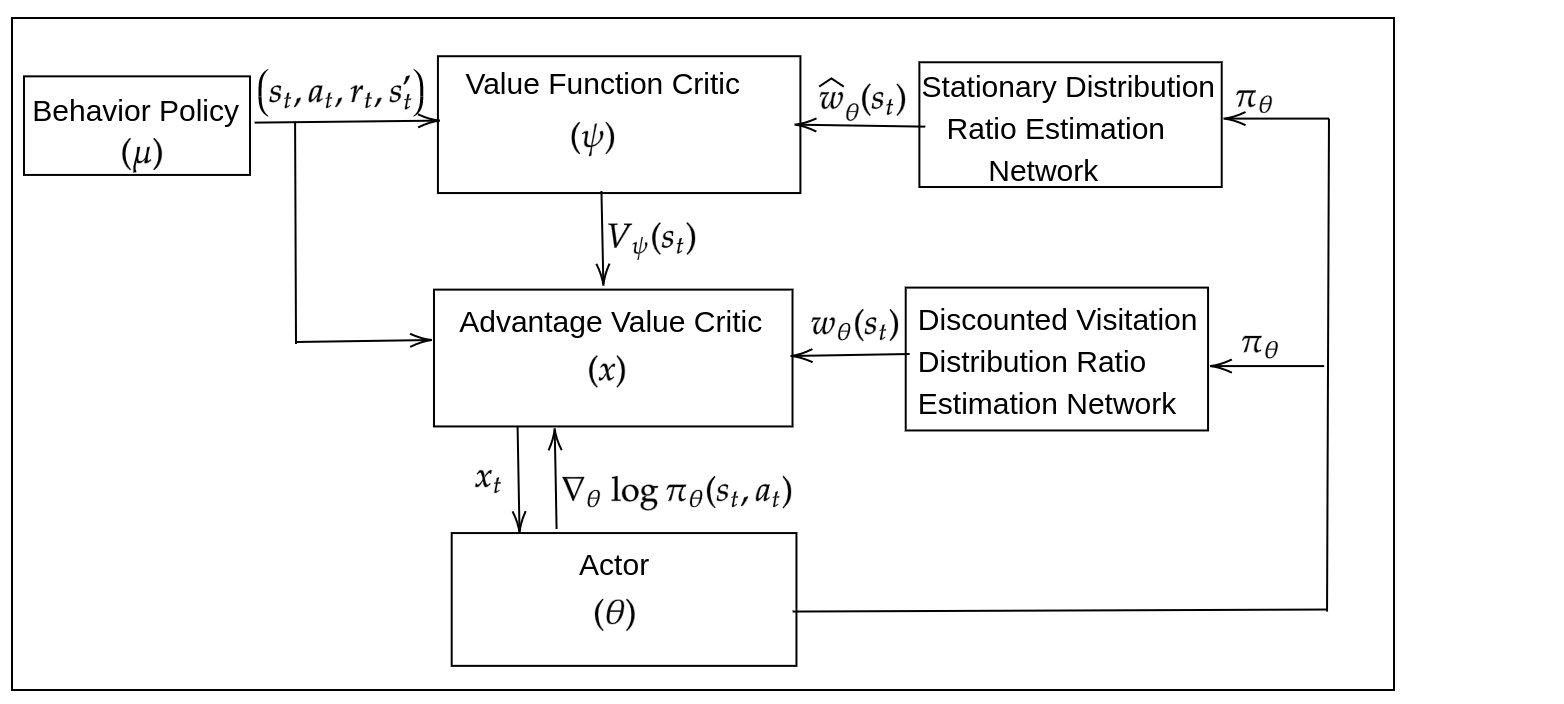}
    \caption{Flowchart of our proposed off-Policy NAC Algorithm}
    \label{fig:PA}
\end{figure*}
These steps are described below:

1. \textit{Estimating the Ratio of Distributions:} The ratio of distributions $\hat{w}_{\theta}$ and $w_{\theta}$ are estimated respectively using Algorithm 1 and Algorithm 2 of \cite{liu2018breaking}. These algorithms take the behavior policy $\mu$ and the current policy $\pi_\theta$ as inputs and estimate the ratios $\hat{w}_{\theta}(s)$ and $w_{\theta}(s)$ for all $s \in S$. The ratio $\hat{w}_{\theta}(s)$ is used by the value function critic and $w_{\theta}(s)$ is used by the advantage value function critic for correcting the state-action distribution mismatch. This will be further explained below.

2. \textit{Estimating the value function:}
Next, the value function corresponding to the current policy is estimated. Under function approximation, this corresponds to computing optimal parameter $\psi^*$.
In on-policy setting, the value function is estimated via the update \cite{sutton1998introduction}:
\begin{align}\label{prop-eq1}
    \psi_{t+1} = \psi_t + \alpha_t (r_t + \gamma V_{\psi_t}(s'_t) - V_{\psi_t}(s_t))&\nabla_{\psi}V_{\psi_t}(s_t), \nonumber \\ & ~ \forall t \geq 0, 
\end{align}
where $\alpha_t$ is the step-size and $(s_t,a_t,r_t,s'_t)$ is the data sample derived from policy $\pi_{\theta_t}$. 

However, as the data sample in the off-policy problem is obtained from the behavior policy, i.e., $s_t$ is sampled from $d_\mu(.)$, $a_t$ is sampled from $\mu(s_t,.)$, the update \eqref{prop-eq1} has to be corrected for state and action distribution mismatch. 
Therefore, we correct the update \eqref{prop-eq1} as follows:
\begin{align}\label{prop-e21}
    \psi_{t+1} = \psi_t + \alpha_t   \hat{w}_{\theta}(s_t)\rho(s_t,a_t) &(r_t + \gamma V_{\psi_t}(s'_t) - V_{\psi_t}(s_t)) \nonumber \\ &\nabla_{\psi}V_{\psi_t}(s_t), ~ \forall t \geq 0.
\end{align}
As $t \xrightarrow{} \infty$, $\psi_t \xrightarrow{} \psi^*$, almost surely \footnote{We assume here that the neural network is rich enough to estimate the actual value function.}. The correction in \eqref{prop-e21} is very important as off-policy value function computation without state and action distribution correction can diverge \cite{sutton1998introduction}.
This crucial value function correction was not done in the previous off-policy control algorithms like \cite{liu2020off}. 
 
3. \textit{Estimating $x$ corresponding to policy $\pi_{\theta}$:} As discussed earlier, the parameter $x$ for the policy $\pi_{\theta}$ should satisfy \eqref{eq10}. This can be computed through an iterative update as follows:
\begin{align}\label{prop-eq3}
    x_{t+1} = x_t &+ \alpha_t (r_t + \gamma V_{\psi^*}(s'_t) - V_{\psi^*}(s_t) \nonumber \\ &- x^T \nabla_{\theta} \log \pi_{\theta}(s_t,a_t)) \nabla_{\theta} \log \pi_{\theta}(s_t,a_t), ~ \forall t \geq 0,
\end{align}
where $r_t + \gamma V_{\psi}(s'_t)- V_{\psi}(s_t)$ is used as a sample estimate for $A^{\pi_{\theta}}(s,a)$. 

However, the data samples above are obtained from $\pi_{\theta}$. In the off-policy problem, we correct the update \eqref{prop-eq3} as follows:
\begin{align}\label{xt-iterates}
    x_{t+1} = x_t &+ \alpha_t w_{\theta}(s_t) \rho(s_t,a_t) (r_t + \gamma V_{\psi^*}(s'_t) - V_{\psi^*}(s_t) \nonumber \\ &- x^T \nabla_{\theta} \log \pi_{\theta}(s,a)) \nabla_{\theta} \log \pi_{\theta}(s,a), ~ \forall t\geq0. 
\end{align}
We now state a result on the convergence of $\{x_t\}, ~ t \geq 0$. 
\begin{theorem}\label{nac-thm1}
For a given policy parameter $\theta$, the advantage critic updates $\{x_t, ~ t \geq 0\}$ given by \eqref{xt-iterates},
where $x_0$ is a randomly chosen initial point, converge to $x^*$, where $x^*$ satisfies:
\begin{align}\label{ppa-eq1}
    \E_{s\sim \Tilde{d}_{\pi_{\theta}}, a \sim \pi_{\theta}}[(A^{\pi_{\theta}}(s,a) - &(x^*)^T \nabla_{\theta} \log \pi_{\theta}(s,a)) \nonumber \\ &\nabla_{\theta} \log \pi_{\theta}(s,a) ]= 0.
\end{align}
\end{theorem}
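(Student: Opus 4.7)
\textbf{Proof Plan for Theorem \ref{nac-thm1}.}

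The plan is to recognize the update \eqref{xt-iterates} as a linear stochastic approximation scheme driven by samples from the behavior policy and to analyze its limiting behavior via the ODE method of Borkar. I would first rewrite
\begin{align*}
x_{t+1} = x_t + \alpha_t\bigl[h(x_t) + M_{t+1}\bigr],
\end{align*}
where $h(x_t) := \E[\Delta_t \mid \F_t]$ is the conditional mean of the increment with $\F_t := \sigma(x_0, s_0, a_0, \ldots, s_t, a_t)$, and $M_{t+1} := \Delta_t - h(x_t)$ forms a martingale difference sequence with respect to $\{\F_t\}$. Here $\Delta_t$ denotes the bracketed term in \eqref{xt-iterates} (excluding $\alpha_t$).

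Next I would evaluate $h(x)$ by exploiting the two importance ratios. Writing $f(s,a) := \nabla_\theta \log \pi_\theta(s,a)$ and averaging $(s,a,s')$ under the behavior-induced distribution, the factors $w_\theta(s)\rho_\theta(s,a)$ convert the expectation so that
\begin{align*}
h(x) = \E_{s\sim \Tilde d_{\pi_\theta},\, a\sim \pi_\theta,\, s'\sim P}\!\bigl[\bigl(r(s,a)+\gamma V_{\psi^*}(s') - V_{\psi^*}(s) - x^{T}f(s,a)\bigr) f(s,a)\bigr].
\end{align*}
Under the assumption (stated after \eqref{prop-e21}) that the value critic has converged to the true $V^{\pi_\theta}$, the inner TD-error term has conditional mean $A^{\pi_\theta}(s,a)$, so $h(x) = b - Ax$ with
\begin{align*}
A := \E_{s\sim \Tilde d_{\pi_\theta},\, a\sim \pi_\theta}[f(s,a)f(s,a)^{T}], \qquad b := \E_{s\sim \Tilde d_{\pi_\theta},\, a\sim \pi_\theta}[A^{\pi_\theta}(s,a) f(s,a)].
\end{align*}
Note that $A \succeq 0$, and under the standard assumption that the compatible features are linearly independent under $\Tilde d_{\pi_\theta}\otimes \pi_\theta$, $A$ is positive definite.

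With $h$ linear and $A$ positive definite, the limiting ODE $\dot x = b - A x$ has the unique globally asymptotically stable equilibrium $x^* = A^{-1}b$, which by construction satisfies \eqref{ppa-eq1}. I would then verify the Borkar stochastic approximation hypotheses: the step-sizes obey $\sum_t \alpha_t = \infty$ and $\sum_t \alpha_t^2 < \infty$; $h$ is Lipschitz since it is affine; and $\E[\|M_{t+1}\|^2 \mid \F_t] \leq K(1+\|x_t\|^2)$ follows from boundedness of rewards, value function, features, $w_\theta$ and $\rho_\theta$ (the last two under the standard coverage assumption $\mu(a|s) > 0$ wherever $\pi_\theta(a|s) > 0$ and $\Tilde d_\mu$ dominating $\Tilde d_{\pi_\theta}$).

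The main obstacle I anticipate is establishing almost-sure boundedness of the iterates $\{x_t\}$, since Borkar's convergence theorem requires $\sup_t \|x_t\| < \infty$ a.s. I would address this by appealing to the Borkar--Meyn theorem: examining the scaled vector field $h_c(x) := h(cx)/c$ as $c\to\infty$ yields $h_\infty(x) = -Ax$, whose ODE $\dot x = -Ax$ is globally asymptotically stable at the origin by positive definiteness of $A$. This ensures stability of the iterates, after which the standard ODE-based convergence argument delivers $x_t \to x^*$ almost surely. A secondary subtlety is the reliance on $V_{\psi^*} = V^{\pi_\theta}$; if one only has $V_{\psi^*} \approx V^{\pi_\theta}$ the argument carries through with a perturbation term that vanishes as the approximation improves, and the limit point is correspondingly shifted.
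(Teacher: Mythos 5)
Your proposal follows essentially the same route as the paper's proof: both cast \eqref{xt-iterates} as a Borkar-style stochastic approximation $x_{t+1}=x_t+\alpha_t[h(x_t)+M_{t+1}]$ with affine $h(x)=b-F(\theta)x$, verify the Lipschitz, step-size, martingale-difference and second-moment conditions using boundedness of rewards, features, $\rho_\theta$ and $w_\theta$, establish stability via the scaled field $h_c(x)\to -F(\theta)x$ and positive definiteness of the Fisher matrix, and conclude $x_t\to x^*=F(\theta)^{-1}b$ from the limiting ODE. Your added remarks on iterate boundedness and on the $V_{\psi^*}\approx V^{\pi_\theta}$ approximation match what the paper handles via the Borkar--Meyn condition \textbf{(C5)} and a footnoted richness assumption, respectively.
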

\begin{proof}
Please refer Section \ref{TR} for the proof of Theorem \ref{nac-thm1}.
\end{proof}

4. \textit{Updating the policy parameters:} Finally, as derived in \eqref{eq11}, the current policy parameter $\theta$ is updated using the natural policy gradient $x^*$ as follows:
\begin{align}\label{prop-policyupdate}
    \theta \xleftarrow{} \theta + \beta x^*,
\end{align}
where $\beta$ is the step-size. As $x^*$ satisfies \eqref{ppa-eq1}, it is the natural gradient direction corresponding to policy $\pi_\theta$ and hence the policy parameters converge to a local optimal solution.

In the above discussion, each step needs to be carried out till convergence before updating the next step. That is, value function parameters $\psi_t$ in \eqref{prop-e21} have to be iteratively run until they converge to obtain $\psi^*$. Subsequently, this $\psi^*$ is used in the update of $x_t$ parameters, which must also be run until convergence to compute $x^*$. Finally, the policy parameter can be updated using \eqref{prop-policyupdate}.  However, this procedure is not feasible for implementation and hence we utilize the concepts of two-timescale stochastic approximation \cite{borkar2009stochastic,kushner2012stochastic} to carry out all the steps sequentially with step-size schedules $\alpha_n, \beta_n, ~ n \geq 0$ in steps 12-16 of the Algorithm \ref{alg:NAC} satisfying
$$\displaystyle \sum_{n= 0}^{\infty} \alpha_n = \sum_{n= 0}^{\infty} \beta_n = \infty, ~ \displaystyle \sum_{n= 0}^{\infty} (\alpha_n + \beta_n)^2 < \infty, ~ \frac{\beta_n}{\alpha_n} \xrightarrow{} 0.$$
The flow chart of an iteration of our algorithm is illustrated in Figure \ref{fig:PA}. Note that the value function $\psi_t$ and natural gradient $x_t$ updates are carried out on a faster time-scale and the policy update $\theta_t$ is carried out on a slower time-scale to ensure convergence.

\section{Theoretical Result}\label{TR}
In this section, we present the proof of Theorem \ref{nac-thm1}. To prove Theorem \ref{nac-thm1}, we cast the $x_t$ updates \eqref{prop-eq3} in the framework of stochastic approximation. In section \ref{ss1}, we briefly describe the standard stochastic approximation scheme \cite[Chapters 2,3]{borkar2009stochastic} and assumptions required for convergence. We then give the convergence proof of iterates $x_t$ followed by the proof of Theorem 1 in section \ref{ss2}.
\subsection{Stochastic Approximation scheme}\label{ss1}
Suppose there is an update of following form:
\begin{align} \label{nac-conv-1}
    &x_{n+1} = x_n + a(n) [h(x_n)+ M_{n+1}],
\end{align}
with a prescribed initial $x_0 \in \R^d$, where 
\begin{fleqn}[0in]
\begin{align*}
    \textbf{(C1)}~ \text{The map $h: \R^d \xrightarrow{} \R^d$ is Lipschitz.}
\end{align*}
\begin{align*}
    \textbf{(C2)}~ \text{The step-size sequence $\{a(n)\}$ satisfy:}\\ \sum_{n= 0}^{\infty} a(n) =\infty,  ~ \sum_{n= 0}^{\infty} a(n)^2 < \infty.
\end{align*}
\begin{align*}
    \textbf{(C3)}~ \text{$\{M_n\}$ is a martingale difference sequence w.r.t increasing}~\\ \text{sequence of sigma-fields:} \mathcal{F}_n = \sigma \{x_m, M_m, ~ m \leq n\}, ~ n \geq 0.
\end{align*}
\begin{align*}
    \textbf{(C4)} ~ \E[\|M_{n+1}\|^2 \mid \F_n] \leq K (1 + \|x_n\|^2), ~ \forall n,\\\text{for some} ~ 0 < K < \infty, ~ \text{almost surely.}
\end{align*}
    \textbf{(C5)} \text{The functions $h_c(x) \doteq \frac{h(cx)}{c}, ~ c \geq 1$} satisfy $h_c(x) \xrightarrow{} h_\infty(x)$ as $c \xrightarrow{} \infty$, uniformly on compacts for some $h_\infty$. Furthermore, the ODE $\dot x(t) = h_\infty(x(t))$ has origin as its unique globally asymptotic stable equilibrium. 
\end{fleqn}

Under \textbf{(C1-C5)}, we have \cite[Theorem 2, Chapter 2]{borkar2009stochastic},
\begin{theorem}\label{nac-sa-thm1}
Almost surely, the sequence $\{x_n\}$ generated by \eqref{nac-conv-1} converges to a compact connected internally chain transitive invariant set of the ODE $\dot x(t) = h(x(t))$.
\end{theorem}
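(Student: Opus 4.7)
The plan is to establish the result in two stages, following the classical Borkar--Meyn programme: (i) first prove that the iterates $\{x_n\}$ remain almost surely bounded, and (ii) then show that the linearly interpolated trajectory formed from $\{x_n\}$ is an asymptotic pseudo-trajectory of the flow induced by the ODE $\dot x(t) = h(x(t))$, so that its limit set must be a compact connected internally chain transitive invariant set of that flow.

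For the stability step I would exploit condition \textbf{(C5)}. Choose a suitable scaling sequence $r(n) = \max(\|x_{T_k}\|,1)$ on time blocks $[T_k,T_{k+1}]$ defined by $\sum_{m=T_k}^{T_{k+1}-1} a(m) \approx T$ for a fixed horizon $T>0$, and form the rescaled iterates $\hat x_n = x_n/r(n)$. A direct calculation, using \textbf{(C1)} to split $h(cx)/c$ into $h_c(x)$ plus a vanishing remainder and using the martingale bound \textbf{(C4)} together with \textbf{(C2)} to control the noise, shows that $\hat x_n$ tracks the scaled ODE $\dot x = h_c(x)$ on each block; letting $c \to \infty$, \textbf{(C5)} forces the rescaled trajectory to approach the origin since the limiting ODE has the origin as its unique globally asymptotically stable equilibrium. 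Hence $\|x_n\|/r(n)$ contracts block by block, which is incompatible with $r(n) \to \infty$, and boundedness follows almost surely.

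Given a.s.\ boundedness, I would then construct the continuous interpolation $\bar x(t)$ with $\bar x(t(n)) = x_n$ where $t(n) = \sum_{m=0}^{n-1} a(m)$ and $\bar x$ linear between these points, and compare it to the solution $x^s(\cdot)$ of the ODE started at $\bar x(s)$. A telescoping sum followed by \textbf{(C1)} (Lipschitz $h$) and Gronwall's inequality yields $\sup_{t\in[s,s+T]}\|\bar x(t)-x^s(t)\| \le C_T\bigl(\varepsilon_s + \zeta_s\bigr)$, where $\varepsilon_s$ collects discretisation errors that vanish by \textbf{(C2)} and $\zeta_s$ is the uniform size of the noise partial sums $\zeta_n = \sum_{m=0}^{n-1} a(m) M_{m+1}$. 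Condition \textbf{(C3)} makes $\{\zeta_n\}$ a martingale, and \textbf{(C4)} combined with the already established boundedness and the square-summability in \textbf{(C2)} gives $\sup_n \E\|\zeta_n\|^2 < \infty$, so by the martingale convergence theorem $\zeta_n$ converges a.s.\ and its tail fluctuations vanish. Therefore $\sup_{t\in[s,s+T]}\|\bar x(t)-x^s(t)\| \to 0$ a.s.\ as $s \to \infty$, which is precisely the asymptotic pseudo-trajectory property.

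Finally, Benaïm's characterisation of limit sets of asymptotic pseudo-trajectories (\cite[Theorem~3, Chapter~2]{borkar2009stochastic}) identifies the $\omega$-limit set of $\bar x(\cdot)$ with a compact, connected, internally chain transitive invariant set of the flow of $\dot x = h(x)$, giving the stated conclusion. I expect the main obstacle to be the stability step: carrying out the block-wise rescaling argument so that the time horizon $T$, the scale $c$, and the noise control from \textbf{(C4)} fit together cleanly enough to contradict unboundedness. The ODE-tracking portion is then comparatively routine Gronwall bookkeeping, and the appeal to Benaïm's theorem is immediate once the pseudo-trajectory property is in hand.
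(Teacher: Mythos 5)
The paper does not actually prove this statement: it is imported verbatim from Borkar's monograph (\cite[Theorem 2, Chapter 2]{borkar2009stochastic} for the convergence claim, combined with the Borkar--Meyn stability theorem of Chapter 3, which is what condition \textbf{(C5)} is for), and that reference's proof is exactly what you outline --- the block-wise rescaling argument under \textbf{(C5)} to get almost-sure boundedness, then Gronwall plus martingale convergence to show the interpolated trajectory is an asymptotic pseudo-trajectory, and finally Bena\"{\i}m's characterisation of its limit set as a compact connected internally chain transitive invariant set. Your sketch is correct and is essentially the same argument as the one the paper relies on by citation.
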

\subsection{Proof of Theorem \ref{nac-thm1}}\label{ss2}
For simplicity, we assume that the state and action spaces are finite. The proof can be extended for continuous state space as well.  
Suppose $s_t \sim \Tilde{d_\mu}(.), a_t \sim \mu(s,.), s'_t \sim P(.|s_t,a_t)$ be the sample obtained at time $t$. Let us re-write $x_{t+1}$ in \eqref{xt-iterates} in the form of stochastic approximation scheme defined in \eqref{nac-conv-1}, i.e.,
\begin{align}
    x_{t+1} = x_{t} + \alpha_t \displaystyle (h(x_t) + M_{t+1}),
\end{align}
where $h(x_t) = -\frac{1}{2}\nabla_x \displaystyle \E_{s\sim \Tilde{d}_{\pi_{\theta}}, a \sim \pi_{\theta}}[(A^{\pi_{\theta}}(s,a) - x_t^T \nabla_{\theta}\log \pi_{\theta}(s,a))^2]$, and $M_{t+1} = w_{\theta}(s_t) \rho_{\theta}(s_t,a_t)(r_t + \gamma V_{\psi^*}(s'_t) - V_{\psi^*}(s_t) - x_t^T\nabla_{\theta}\log \pi_{\theta}(s_t,a_t))\nabla_{\theta}\log \pi_{\theta}(s_t,a_t) - h(x_t)$.

Let $\mathcal{F}_t = \sigma \{\theta_m,x_m,M_m, ~ m \leq t\}, ~ t \geq 0,$ be the increasing family of sigma-fields. We will now verify conditions \textbf{(C1-C5)} and invoke Theorem \ref{nac-sa-thm1} to show the convergence. 

Let us define 
\begin{align}
    F(\theta) = \displaystyle \E_{s\sim \Tilde{d}_{\pi_\theta}, a \sim \pi_{\theta}}[\nabla_{\theta} \log \pi_{\theta}(s,a) \nabla_{\theta} \log \pi_{\theta}(s,a)^{T}].
\end{align}
Then, $h(x_t) = \E[A^{\pi_{\theta}}(s,a)\nabla \log \pi_\theta(s,a)] - F(\theta)x_t.$
For any two $x_1,x_2 \in \R^d,$ we have
\begin{align}
    \|h(x_1) - h(x_2)\| \leq \|F(\theta)\|\|x_1 - x_2\|.
\end{align}
Therefore $h$ is a Lipschitz function with $\|F(\theta)\|$ as the Lipschitz constant and hence \textbf{(C1)} is true.  Moreover, the step-size sequence $\alpha_t = \frac{1}{t+1}, ~ t \geq 0,$ satisfies \textbf{(C2)}.

Let us define $\delta_t = r_t + \gamma V_{{\psi^*}}(s'_t) -V_{{\psi^*}}(s_t)$.
Then, we have 
\begin{align*}
    \E[\delta_t \mid s_t,a_t,\theta] &= r(s_t,a_t) + \sum_{s'}p(s'|s_t,a_t) V_{{\psi^*}}(s') - V_{{\psi^*}}(s_t) \\ & =Q^{\pi_\theta}(s_t,a_t) - V_{{\psi^*}}(s_t) \\ &= A^{\pi_{\theta}}(s_t,a_t).
\end{align*}
Therefore, $\E[M_{t+1}\mid \mathcal{F}_t]$
\begin{align*}
    &= \E[w_{\theta}(s_t) \rho_{\theta}(s_t,a_t)(\delta_t- x_t^T\nabla_{\theta}\log \pi_{\theta}(s_t,a_t))\\ &\hspace{1.5cm} \nabla_{\theta}\log \pi_{\theta}(s_t,a_t) - h(x_t) \Big{|} \F_t] \\
    &= \E[w_{\theta}(s_t) \rho_{\theta}(s_t,a_t)(\E[\delta_t \mid s_t,a_t,\theta,\F_t]- x_t^T\nabla_{\theta}\log \pi_{\theta}(s_t,a_t))\\ &\hspace{1.5cm} \nabla_{\theta}\log \pi_{\theta}(s_t,a_t) - h(x_t) \Big{|} \F_t] \\
    \end{align*}
\begin{align*}
    &= \E[w_{\theta}(s_t) \rho_{\theta}(s_t,a_t)(A^{\pi_{\theta}}(s_t,a_t)- x_t^T\nabla_{\theta}\log \pi_{\theta}(s_t,a_t))\\ &\hspace{1.5cm} \nabla_{\theta}\log \pi_{\theta}(s_t,a_t) - h(x_t) \Big{|} \F_t] \\
    & =\E_{s\sim \Tilde{d}_{\mu}, a \sim \mu}[w_{\theta}(s) \rho_{\theta}(s,a)(A^{\pi_{\theta}}(s,a) - x_t^T \nabla_{\theta}\log \pi_{\theta}(s,a))\\ &\hspace{1.5cm} \nabla_{\theta}\log \pi_{\theta}(s,a)]  - h(x_t) \\
    &= \E_{s\sim \Tilde{d}_{\pi_{\theta}}, a \sim \pi_{\theta}}[(A^{\pi_{\theta}}(s,a) - x_t^T \nabla_{\theta}\log \pi_{\theta}(s,a))\\ &\hspace{1.5cm} \nabla_{\theta}\log \pi_{\theta}(s,a)]  - h(x_t)  \\
    &= 0.
\end{align*}
Therefore, $\{M_{t}, ~ t \geq 0\}$ is a Martingale difference sequence. Hence \textbf{(C3)} is proved.

For $0 < K_2,K_3 < \infty$, let 
\begin{align}
    &\displaystyle \max_{s,a} |r(s,a)| \leq K_2, \label{reward-bound} \\ \label{nabla-bound}
    &\displaystyle \max_{s,a} \| \nabla_\theta \log \pi_\theta(s,a) \| \leq K_3.
\end{align}
Then, we have,
\begin{align}\label{delta-eqn}
   |\delta_t| \leq \frac{2K_2}{1-\gamma} =  K_4 ~ (\text{let}), ~ \forall t \geq 0.
\end{align}
Next, using Cauchy–Schwarz inequality, we have,
\begin{align} \label{cs}
    |x_t^T\nabla_{\theta}\log \pi_{\theta}(s,a)| &\leq \|x_t\|\|\nabla_{\theta}\log \pi_{\theta}(s,a)| \nonumber \\ & \leq K_3\|x_t\|, ~ \forall s,a,t
\end{align}
The behavior policy $\mu$ is selected such that there is positive probability of selecting all actions in each state. For example the behavior policy chosen in the experiments is the one that selects all actions with equal probability in every state. Therefore, for some $0 < K_5,K_6 < \infty$, 
\begin{align}
    &\rho_\theta(s,a) = \frac{\pi_\theta(s,a)}{\mu(s,a)} \leq  \frac{1}{\mu(s,a)} \leq \frac{1}{\min_{s,a} \mu(s,a)} = K_5, ~ \forall s,a,\theta.  \\ 
    &w_\theta(s) = \frac{\tilde{d}_{\pi_\theta}(s)}{\tilde{d}_\mu(s)} \leq \frac{1}{\tilde d_\mu(s)} \leq \frac{1}{\min_{s} \tilde d_\mu(s)} = K_6, ~ \forall s,\theta. \label{nac-w-bound}
\end{align}

From \eqref{nabla-bound}-\eqref{nac-w-bound}, it is easy to see 
\begin{align}
    \E[\|M_{t+1} \|^2 \mid \ \F_t] \leq K(1 + \|x_t\|^2), ~ \text{almost surely,}
\end{align}
where $K = (K_5K_6)^2 \max\{2(K_3K_4)^2,2K_3^4\}$ . Hence \textbf{(C4)} is true.

Now, let us construct the functions $h_c(x) = \frac{h(cx)}{c}, ~ c \geq 1.$ 
\begin{align}
    h_c(x) = \frac{\E[A^{\pi_{\theta}}(s,a)\nabla \log \pi_\theta(s,a)]}{c} - F(\theta)x.
\end{align}
It is easy to see that
\begin{align}
    h_c(x) \xrightarrow{} -F(\theta)x, ~ \text{as} ~ c\xrightarrow{} \infty.
\end{align}
Note that $F(\theta)$ is the Fisher information matrix, which is shown to be positive definite \cite{peters2008natural}. Therefore, $\{0\}$ is the unique globally asymptotic stable equilibrium of the ODE $\dot x(t) = -F(\theta)x(t)$, with $V_1(x) = \frac{1}{2}\|x\|^2$ serving as the corresponding Liapunov function. Hence, \textbf{(C5)} is satisfied.

Finally, let us consider the ODE
\begin{align}\label{nac-last-ode}
    \dot x(t) &= h(x(t))\\ &= -\frac{1}{2}\nabla_x \displaystyle \E_{s\sim \Tilde{d}_{\pi_{\theta}}, a \sim \pi_{\theta}}[(A^{\pi_{\theta}}(s,a) - x(t)^T \nabla_{\theta}\log \pi_{\theta}(s,a))^2].
\end{align}

Then, $\{x^*\}$ is a connected internally chain transitive invariant set of the ODE \eqref{nac-last-ode}, with $V_2(x) = \displaystyle \E_{s\sim \Tilde{d}_{\pi_{\theta}}, a \sim \pi_{\theta}}[(A^{\pi_{\theta}}(s,a) - x^T \nabla_{\theta}\log \pi_{\theta}(s,a))^2]$ serving as the corresponding Liapunov function. 

Therefore, from Theorem \ref{nac-sa-thm1}, we have
\begin{align*}
    x_t \xrightarrow{} x^*, ~ \text{almost surely.}
\end{align*}
\section{Experiments and Results}\label{exp}
In this section, we discuss the performance of the proposed on-policy and off-policy algorithms on four RL tasks available in OpenAI gym \cite{1606.01540}. Please refer to the supplementary material for a detailed description on these four RL tasks. We compare our algorithms ``Deep NAC'' (Algorithm \ref{alg:deepNAC}) and ``Deep OffNAC'' (Algorithm \ref{alg:NAC}) with their standard actor-critic counterparts (which we refer to as) ``Deep AC'' in the on-policy setting and ``Deep OffAC'' in the off-policy setting. 
\subsection{Experimental Setup}
We study the performance of the algorithms in the training and testing phases. The objective in the training phase is to measure how fast an algorithm reaches the maximum total reward. This is depicted through plots where the x-axis denotes the total number of episodes utilised for the training and the y-axis denotes the average total reward achieved by the algorithm. 
The average total reward at an episode $i$ is calculated as: Average total reward $(i)$ = 0.9 $\times$ total reward obtained in episode $i$ + 0.1 $\times$ Average total reward $(i-1)$. This metric gives higher weight to the recent episodes and hence is a better indicator of the performance of the policy. Subsequently in the testing phase, the best performing policy (the policy parameters corresponding to the highest average total reward) during the training is chosen and its performance (mean and standard deviation of the total reward) over $1000$ independent episodes is tabulated.
The detailed description of neural network architectures and hyper-parameters for all the algorithms are included in the supplementary material. 
We first discuss results in the on-policy setting and then results obtained in the off-policy setting.
\subsection{On-Policy Results}
\begin{figure}[ht]
\centering
        \begin{subfigure}[b]{0.25\textwidth}
                \includegraphics[width=\linewidth]{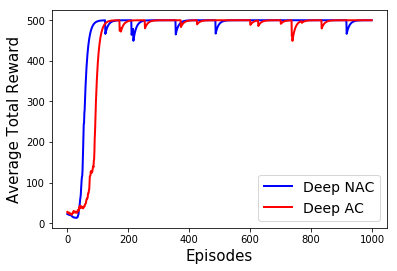}
                \caption{CartPole}
                \label{on1}
        \end{subfigure}%
        \begin{subfigure}[b]{0.25\textwidth}
                \includegraphics[width=\linewidth]{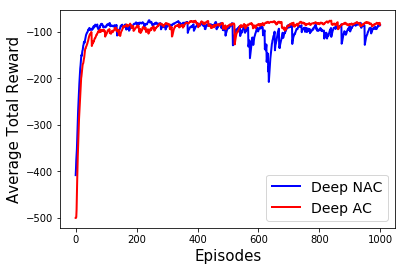}
                \caption{Acrobot}
                \label{on2}
        \end{subfigure}%
        \hfill 
        \begin{subfigure}[b]{0.25\textwidth}
                \includegraphics[width=\linewidth]{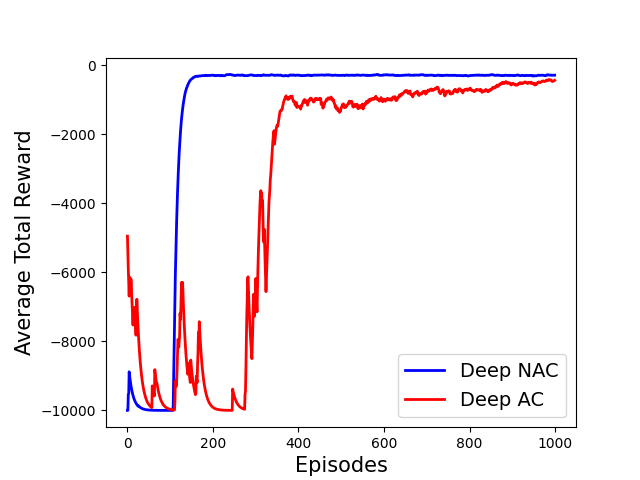}
                \caption{Mountain Car}
                \label{0n3}
        \end{subfigure}%
        \begin{subfigure}[b]{0.25\textwidth}
                \includegraphics[width=\linewidth]{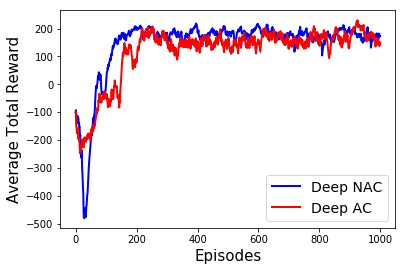}
                \caption{Lunar Lander}
                \label{on4}
        \end{subfigure}%
        \caption{Performance of Deep NAC and Deep AC algorithms during the training phase}\label{on-train}
\end{figure}
\begin{table}[ht]
\centering
\renewcommand{\arraystretch}{1.3}
\begin{tabular}{|c|c|c|}
\hline
\textbf{Environmment} & \textbf{Deep AC} & \textbf{Deep NAC} \\ \hline
\textbf{CartPole}     & 497.20$\pm$6.29   & 498.60$\pm$4.9    \\ \hline
\textbf{Acrobot}      & -83.91$\pm$4.92   & -84.83$\pm$6.31   \\ \hline
\textbf{Mountain Car} & -448.95$\pm$29.10  & -291.13$\pm$8.38  \\ \hline
\textbf{Lunar Lander} & 171.51$\pm$24.14  & 179.93$\pm$15.90  \\ \hline
\end{tabular}
\caption{Comparison between Deep NAC and Deep AC during the testing phase}
\label{on-test}
\end{table}
The performance of algorithms during the training and testing phases is shown in Figure \ref{on-train} and Table \ref{on-test} respectively. From Figure \ref{on-train}, we can observe that our proposed ``Deep NAC'' algorithm reaches the maximum total reward faster compared to the ``Deep AC'' algorithm in all the four tasks. 

The natural gradients employed by the actor in the ``Deep NAC'' algorithm uses the curvature information and hence it learns faster than the ``Deep AC''algorithm. Moreover, as the natural policy gradients are less sensitive to model reparameterization, we observe that hyper-parameter tuning for ``Deep NAC'' is easy compared to ``Deep AC'' algorithm. 
Table \ref{on-test} illustrates the performance comparison in the test phase, i.e., the mean and variance of the best learnt policy on all four tasks.
\subsection{Off-Policy Results}
\begin{figure}[ht]
\centering
        \begin{subfigure}[b]{0.25\textwidth}
                \includegraphics[width=\linewidth]{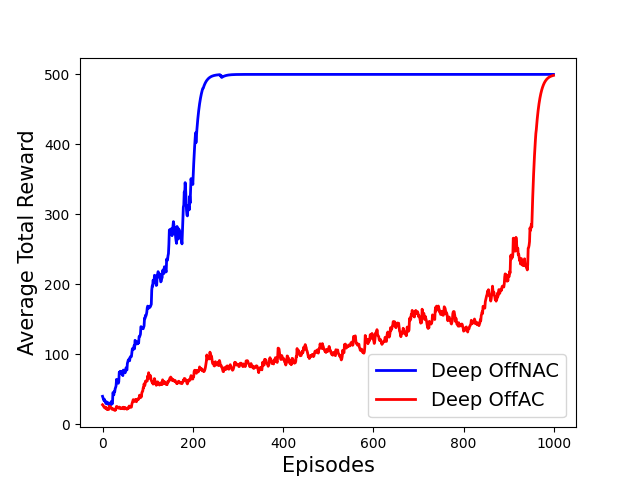}
                \caption{CartPole}
                \label{of1}
        \end{subfigure}%
        \begin{subfigure}[b]{0.25\textwidth}                \includegraphics[width=\linewidth]{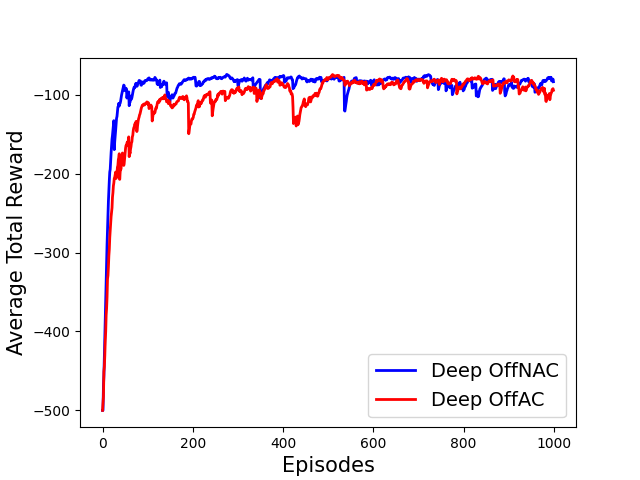}
                \caption{Acrobot}
                \label{of2}
        \end{subfigure}%
        \hfill
        \begin{subfigure}[b]{0.25\textwidth}
                \includegraphics[width=\linewidth]{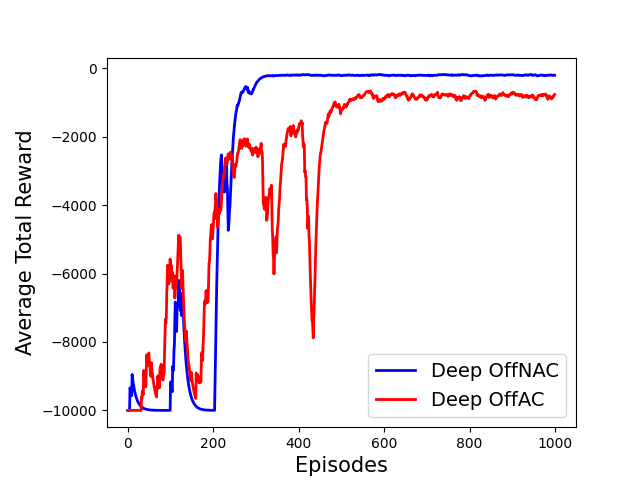}
                \caption{Mountain Car}
                \label{0f3}
        \end{subfigure}%
        \begin{subfigure}[b]{0.25\textwidth}
                \includegraphics[width=\linewidth]{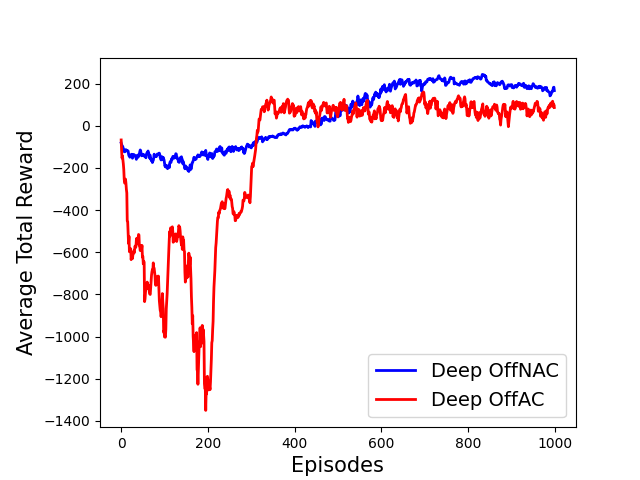}
                \caption{Lunar Lander}
                \label{of4}
        \end{subfigure}%
        \caption{Performance of Deep OffNAC and Deep OffAC algorithms during the training phase}\label{off-train}
\end{figure}
\begin{table}[ht]
\centering
\renewcommand{\arraystretch}{1.3}
\begin{tabular}{|c|c|c|}
\hline
\textbf{Environmment} & \textbf{Deep OffAC} & \textbf{Deep OffNAC} \\ \hline
\textbf{CartPole}     & 499.62$\pm$0.95   & 500$\pm$0    \\ \hline
\textbf{Acrobot}      & -85.33$\pm$5.36   & -84.28$\pm$4.78   \\ \hline
\textbf{Mountain Car} & -874.62$\pm$76.17  & -202.62$\pm$8.85  \\ \hline
\textbf{Lunar Lander} & 87.05$\pm$27.55  & 215.14$\pm$12.97  \\ \hline
\end{tabular}
\caption{Comparison between Deep OffNAC and Deep OffAC during the testing phase}
\label{off-test}
\end{table}
Here, the data during the training is collected from a behavior policy $\mu$ where all actions are uniformly chosen in every state. In order to evaluate the performance during the training, we run an episode with the current learned parameters after every training episode. 
Similar to the on-policy results we see from Figure \ref{off-train} and Table \ref{off-test} that our proposed ``Deep OffNAC'' algorithm reaches the maximum total reward faster in the training phase. Moreover, the best policy obtained from the ``Deep OffNAC'' gives higher reward during the testing compared to the ``Deep OffAC'' algorithm. 
\begin{figure}[!h]
\centering
        \begin{subfigure}[b]{0.25\textwidth}
                \includegraphics[width=\linewidth]{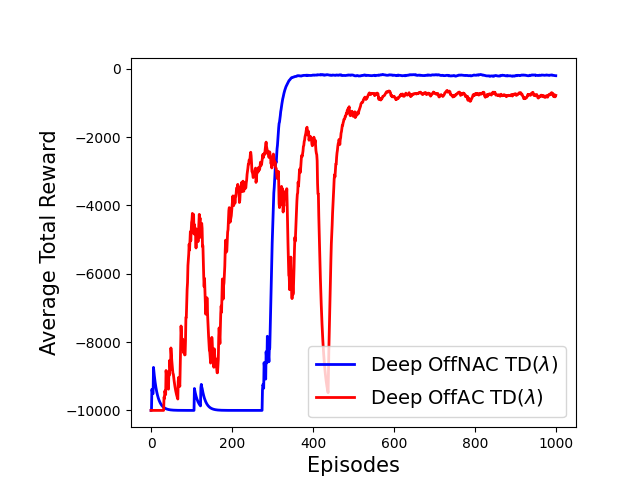}
                \caption{On-Policy }
                \label{ofl1}
        \end{subfigure}%
        \begin{subfigure}[b]{0.25\textwidth}                \includegraphics[width=\linewidth]{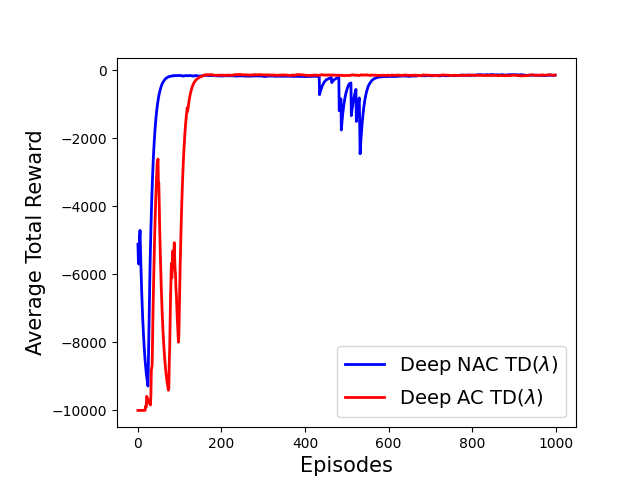}
                \caption{Off-Policy}
                \label{ofl2}
        \end{subfigure}%
\caption{Comparison between Deep AC and Deep NAC with TD($\lambda$) critic during the training phase}
\label{lam-train}
\end{figure}
\begin{table}[ht]
\centering
\renewcommand{\arraystretch}{1.3}
\begin{tabular}{|c|c|c|}
\hline
\textbf{Mountain Car} & \textbf{\begin{tabular}[c]{@{}c@{}}Deep AC with \\ TD($\lambda$) Critic\end{tabular}} & \textbf{\begin{tabular}[c]{@{}c@{}}Deep NAC with \\ TD($\lambda$) Critic\end{tabular}} \\ \hline
\textbf{On-Policy}             & -147.70$\pm$7.39                                                                                     & -143.29$\pm$6.02                                                                                      \\ \hline
\textbf{Off-Policy}            & -771.47$\pm$56.07                                                                                     & -192.66$\pm$9.00                                                                                      \\ \hline
\end{tabular}
\caption{Comparison between Deep NAC and Deep AC algorithms with TD ($\lambda$) critic on Mountain Car}
\label{test-lambda}
\end{table}

From Tables \ref{on-test} and \ref{off-test}, we see that the average rewards achieved by both the algorithms for the Mountain Car task are not satisfactorily high. As this task suffers from delayed reward problem (it receives $0$ only when it reaches the top and $-1$ at intermediate steps), TD(0), due to the high bias nature is unable to estimate the value function optimally. Therefore we consider TD($\lambda)$ critic in place of TD(0) for optimally estimating the value function \cite{sutton1998introduction}. In Figure \ref{lam-train} and Table \ref{test-lambda}, we show the results of Mountain Car trained using TD($\lambda$) critic in both on-policy and off-policy settings during training and testing, respectively. We observe that the average rewards have increased in both on-policy and off-policy settings. Further, the performance of Deep NAC and Deep OffNAC are better than their standard actor-critic counterparts.

\section{Conclusions and Future Work}\label{conc}
In this work, we have proposed the first on-policy deep actor-critic algorithm that utilizes natural gradients in the policy update and compatible features in the advantage critic update for faster convergence. We then extended this algorithm to the off-policy control and proposed a convergent algorithm. 
We illustrated the advantages of the proposed algorithms by comparing it with the standard actor-critic algorithms. 

In the future, we would like to perform ablation studies to better understand the role of the ratio of state-action distributions in the performance of off-policy algorithms. Moreover, it would be interesting to integrate algorithms like Gradient TD \cite{sutton2009fast}, Emphatic TD \cite{sutton2016emphatic} (for value function estimation) with the proposed algorithm. Another important future work is to test the efficacy of our algorithms on Atari games test bed. As these games require deeper neural networks along with convolution neural networks extracting compatible features and implementing natural gradient is a challenging task. 


\section{Acknowledgements}
The authors would like to thank Prof. Koteswararao Kondepu at IIT Dharwad for generously providing GPU support to run our experiments. We also thank High Performance Computing (HPC), IIT Dharwad for  computational resources.


\bibliographystyle{plain} 
\bibliography{references}

\begin{thebibliography}{10}

\bibitem{amari1998natural}
Shun-Ichi Amari.
\newblock Natural gradient works efficiently in learning.
\newblock {\em Neural computation}, 10(2):251--276, 1998.

\bibitem{bertsekas1996neuro}
Dimitri~P Bertsekas and John~N Tsitsiklis.
\newblock {\em Neuro-Dynamic Programming}, volume~5.
\newblock Athena Scientific Belmont, MA, 1996.

\bibitem{bhatnagar2009natural}
Shalabh Bhatnagar, Richard~S Sutton, Mohammad Ghavamzadeh, and Mark Lee.
\newblock Natural actor--critic algorithms.
\newblock {\em Automatica}, 45(11):2471--2482, 2009.

\bibitem{borkar2009stochastic}
Vivek~S Borkar.
\newblock {\em {Stochastic Approximation: A Dynamical Systems Viewpoint}}.
\newblock {Cambridge Univ. Press}, 2008.

\bibitem{1606.01540}
Greg Brockman, Vicki Cheung, Ludwig Pettersson, Jonas Schneider, John Schulman,
  Jie Tang, and Wojciech Zaremba.
\newblock Openai gym, 2016.

\bibitem{degris2012off}
Thomas Degris, Martha White, and Richard~S Sutton.
\newblock Off-policy actor-critic.
\newblock {\em arXiv preprint arXiv:1205.4839}, 2012.

\bibitem{gelada2019off}
Carles Gelada and Marc~G Bellemare.
\newblock Off-policy deep reinforcement learning by bootstrapping the covariate
  shift.
\newblock In {\em Proceedings of the AAAI Conference on Artificial
  Intelligence}, volume~33, pages 3647--3655, 2019.

\bibitem{haarnoja2018soft}
Tuomas Haarnoja, Aurick Zhou, Pieter Abbeel, and Sergey Levine.
\newblock Soft actor-critic: Off-policy maximum entropy deep reinforcement
  learning with a stochastic actor.
\newblock In {\em International Conference on Machine Learning}, pages
  1861--1870. PMLR, 2018.

\bibitem{imani2018off}
Ehsan Imani, Eric Graves, and Martha White.
\newblock An off-policy policy gradient theorem using emphatic weightings.
\newblock {\em arXiv preprint arXiv:1811.09013}, 2018.

\bibitem{iwaki2019implicit}
Ryo Iwaki and Minoru Asada.
\newblock Implicit incremental natural actor critic algorithm.
\newblock {\em Neural Networks}, 109:103--112, 2019.

\bibitem{prabuchandran2016actor}
Prabuchandran K~J, Shalabh Bhatnagar, and Vivek~S Borkar.
\newblock Actor-critic algorithms with online feature adaptation.
\newblock {\em ACM Transactions on Modeling and Computer Simulation (TOMACS)},
  26(4):1--26, 2016.

\bibitem{kakade2001natural}
Sham~M Kakade.
\newblock A natural policy gradient.
\newblock {\em Advances in neural information processing systems}, 14, 2001.

\bibitem{khodadadian2021finite}
Sajad Khodadadian, Zaiwei Chen, and Siva~Theja Maguluri.
\newblock Finite-sample analysis of off-policy natural actor-critic algorithm.
\newblock {\em arXiv preprint arXiv:2102.09318}, 2021.

\bibitem{konda2000actor}
Vijay~R Konda and John~N Tsitsiklis.
\newblock Actor-critic algorithms.
\newblock In {\em Advances in neural information processing systems}, pages
  1008--1014. Citeseer, 2000.

\bibitem{kushner2012stochastic}
Harold~Joseph Kushner and Dean~S Clark.
\newblock {\em Stochastic approximation methods for constrained and
  unconstrained systems}, volume~26.
\newblock Springer Science \& Business Media, 2012.

\bibitem{liu2018breaking}
Qiang Liu, Lihong Li, Ziyang Tang, and Dengyong Zhou.
\newblock Breaking the curse of horizon: Infinite-horizon off-policy
  estimation.
\newblock {\em arXiv preprint arXiv:1810.12429}, 2018.

\bibitem{liu2020off}
Yao Liu, Adith Swaminathan, Alekh Agarwal, and Emma Brunskill.
\newblock Off-policy policy gradient with stationary distribution correction.
\newblock In {\em Uncertainty in Artificial Intelligence}, pages 1180--1190.
  PMLR, 2020.

\bibitem{bhatnagar2009convergent}
Hamid~R Maei, Csaba Szepesv{\'a}ri, Shalabh Bhatnagar, Doina Precup, David
  Silver, and Richard~S Sutton.
\newblock Convergent temporal-difference learning with arbitrary smooth
  function approximation.
\newblock In {\em Advances in Neural Information Processing Systems}, pages
  1204--1212, 2009.

\bibitem{maei2010gq}
Hamid~Reza Maei and Richard~S Sutton.
\newblock {GQ($\lambda$)}: A general gradient algorithm for temporal-difference
  prediction learning with eligibility traces.
\newblock In {\em 3d Conference on Artificial General Intelligence (AGI-2010)}.
  Atlantis Press, 2010.

\bibitem{mnih2016asynchronous}
Volodymyr Mnih, Adria~Puigdomenech Badia, Mehdi Mirza, Alex Graves, Timothy
  Lillicrap, Tim Harley, David Silver, and Koray Kavukcuoglu.
\newblock Asynchronous methods for deep reinforcement learning.
\newblock In {\em International conference on machine learning}, pages
  1928--1937. PMLR, 2016.

\bibitem{peters2008natural}
Jan Peters and Stefan Schaal.
\newblock Natural actor-critic.
\newblock {\em Neurocomputing}, 71(7-9):1180--1190, 2008.

\bibitem{peters2003reinforcement}
Jan Peters, Sethu Vijayakumar, and Stefan Schaal.
\newblock Reinforcement learning for humanoid robotics.
\newblock In {\em Proceedings of the third IEEE-RAS international conference on
  humanoid robots}, pages 1--20, 2003.

\bibitem{schulman2015trust}
John Schulman, Sergey Levine, Pieter Abbeel, Michael Jordan, and Philipp
  Moritz.
\newblock Trust region policy optimization.
\newblock In {\em International conference on machine learning}, pages
  1889--1897. PMLR, 2015.

\bibitem{sutton1998introduction}
Richard~S Sutton and Andrew~G Barto.
\newblock {\em {Introduction to Reinforcement Learning}}.
\newblock MIT press Cambridge, 2018.

\bibitem{sutton2009fast}
Richard~S Sutton, Hamid~Reza Maei, Doina Precup, Shalabh Bhatnagar, David
  Silver, Csaba Szepesv{\'a}ri, and Eric Wiewiora.
\newblock Fast gradient-descent methods for temporal-difference learning with
  linear function approximation.
\newblock In {\em Proceedings of the 26th Annual International Conference on
  Machine Learning}, pages 993--1000. ACM, 2009.

\bibitem{sutton2016emphatic}
Richard~S Sutton, A~Rupam Mahmood, and Martha White.
\newblock An emphatic approach to the problem of off-policy temporal-difference
  learning.
\newblock {\em The Journal of Machine Learning Research}, 17(1):2603--2631,
  2016.

\bibitem{sutton1999policy}
Richard~S Sutton, David~A McAllester, Satinder~P Singh, Yishay Mansour, et~al.
\newblock Policy gradient methods for reinforcement learning with function
  approximation.
\newblock In {\em NIPS}, volume~99, pages 1057--1063. Citeseer, 1999.

\bibitem{wang2016sample}
Ziyu Wang, Victor Bapst, Nicolas Heess, Volodymyr Mnih, Remi Munos, Koray
  Kavukcuoglu, and Nando de~Freitas.
\newblock Sample efficient actor-critic with experience replay.
\newblock {\em arXiv preprint arXiv:1611.01224}, 2016.

\bibitem{wu2017scalable}
Yuhuai Wu, Elman Mansimov, Shun Liao, Roger Grosse, and Jimmy Ba.
\newblock Scalable trust-region method for deep reinforcement learning using
  kronecker-factored approximation.
\newblock {\em arXiv preprint arXiv:1708.05144}, 2017.

\bibitem{xu2021doubly}
Tengyu Xu, Zhuoran Yang, Zhaoran Wang, and Yingbin Liang.
\newblock Doubly robust off-policy actor-critic: Convergence and optimality.
\newblock {\em arXiv preprint arXiv:2102.11866}, 2021.

\bibitem{zhang2019fast}
Guodong Zhang, James Martens, and Roger Grosse.
\newblock Fast convergence of natural gradient descent for overparameterized
  neural networks.
\newblock {\em arXiv preprint arXiv:1905.10961}, 2019.

\bibitem{zhang2019generalized}
Shangtong Zhang, Wendelin Boehmer, and Shimon Whiteson.
\newblock Generalized off-policy actor-critic.
\newblock {\em arXiv preprint arXiv:1903.11329}, 2019.

\bibitem{zhang2020provably}
Shangtong Zhang, Bo~Liu, Hengshuai Yao, and Shimon Whiteson.
\newblock Provably convergent two-timescale off-policy actor-critic with
  function approximation.
\newblock In {\em International Conference on Machine Learning}, pages
  11204--11213. PMLR, 2020.

\end{thebibliography}
\section{Supplementary}\label{suppl}
\subsection{Estimating ratio of State distributions} \label{ratiosection}
In this section, we briefly discuss the procedure for estimating $\hat{w}_\theta$ and $w_\theta$ (proposed in \cite{liu2018breaking}) that we utilize in our proposed ``Deep OffNAC'' Algorithm 2 (Section 5 of the main paper) for correcting state-action distribution .   
Recall that $\hat{w}_\theta(s) = \frac{d_{\pi_\theta}(s)}{d_{\mu}(s)}$ and $w_\theta(s) = \frac{\Tilde{d}_{\pi_\theta}(s)}{\Tilde{d}_{\mu}(s)}$ are the ratio of stationary distributions of the Markov chain $s_t, ~ t \geq 0$ and ratio of state visitation distributions under policies $\pi_\theta$ and $\mu$ respectively. Moreover, $\rho_\theta(s,a) = \frac{\pi_\theta(s,a)}{\mu(s,a)}$ is the importance sampling ratio of the action probabilities.

For any function $f: S \xrightarrow{} \R$ and $w: S \xrightarrow{} \R$, define two functions $L_1$ and $L_2$ as follows:
\begin{align}\label{stat-func}
    L_1(w,f) =\E_{(s,a,s')}[\Delta(w;s,a,s')f(s')],
\end{align}
where the sample $(s,a,s')$ is drawn according to the distribution $d_\mu(s)\times \mu(s,a)\times P(s'|s,a)$ and $\Delta(w;s,a,s') = w(s)\rho_\theta(s,a)- w(s)$. 

\begin{align}\label{l-func}
    L_2(w,f) = \gamma \E_{(s,a,s')}[\Delta(w;s,a,s')f(s')] + (1-\gamma)\E_{s \sim d_0}[(1-w(s))f(s)],
\end{align}
where the sample $(s,a,s')$ is sampled from the distribution $\Tilde{d}_\mu(s)\times \mu(s,a)\times P(s'|s,a)$ and $\Delta(w;s,a,s') = w(s)\rho_\theta(s,a) - w(s)$. 
Note that in \eqref{stat-func}, the state is sampled from stationary distribution $d_\mu$ and in \eqref{l-func}, it is sampled according to state visitation distribution $\title{d}_\mu$. 

It is shown in \cite[Theorem 1]{liu2018breaking} and \cite[Theorem 4]{liu2018breaking} that:
\begin{align}\label{eq9}
    L_1(w,f) = 0, \forall f, ~ \text{iff} ~ w(s) \propto \hat{w}_{\theta}(s), ~ \forall s \in S. \\
    L_2(w,f) = 0, \forall f, ~ \text{iff} ~ w(s) = w_{\theta}(s), ~ \forall s \in S,
\end{align}
respectively. 

Utilising \eqref{eq9}, $\hat{w}_\theta$ could be estimated by solving the following min-max optimization problem:
\begin{align}\label{seq10}
    \displaystyle \min_w \Big{\{} D(w) \coloneqq \max_{f \in \mathcal{F}} L_1(w,f)^2 \Big{\}}, 
\end{align}
where $\mathcal{F}=\{f: S \xrightarrow{} \R$\} is the set of all functions. It is not feasible to directly optimize over all functions $f \in \mathcal{F}$ in \eqref{seq10}. Hence, \cite{liu2018breaking} proposes to optimize over a restricted family of functions like Reproducing Kernel Hilbert space (RKHS) with a suitable kernel. This has an additional advantage of obtaining the closed form expression for $D(w)$ as follows:
\begin{align}\label{seq11}
  D(w) = \E[\Delta(w;s,a,s')\Delta(w;\bar s,\bar a,\bar s')k(s,s')],
\end{align}
where $(s,a,s')$ and $(\bar s,\bar a,\bar s')$ are two independent transition pairs obtained from policy $\mu$ and $k$ is the positive definite kernel. 
If $|S| < \infty$ then using \eqref{seq11}, $\Tilde{w_\theta}$ can be estimated by performing gradient descent over $w$. However, when the number of states are very large or infinite, it is not possible to estimate $\title{w}(s), ~ \forall s$. Hence the function $w: S \xrightarrow{} \R$ can be parameterized (by a neural network) and the gradient descent can be performed over the parameters of the network. By replacing $L_1$ with $L_2$ and obtaining a closed form expression similar to \eqref{seq11} (see \cite[equation 28, Appendix D]{liu2018breaking}), $w_\theta$ could also be estimated in a similar manner.

\subsection{Description of RL tasks and Hyperparameters setting} \label{description}
In this section, we describe the benchmark RL tasks on which we test the algorithms. Further, we also discuss the hyperparameter setting that we set in our experiments.

\subsubsection{CartPole}
It consists of a pole attached by an unactuated joint to a cart. The cart can move on a frictionless track. Episode starts with an upright pole and the task is to balance the pole for maximum number of timesteps by applying either positive or negative force on the cart at each timestep. A \textbf{reward of +1} is received at each timestep until the episode ends. The episode ends if pole is more than $15^{\circ}$ from vertical or cart moves more than 2.4 units from center. The maximum length of episode is \textbf{500 timesteps} and thus the maximum total reward that can be obtained in an episode would be $+500$.
\begin{center}
\includegraphics[width = 8cm]{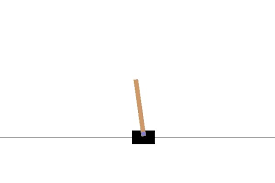}
\end{center}
\textbf{State Space:}
\begin{enumerate}
    \item Cart Position
    \item Cart Velocity
    \item Pole Angle
    \item Pole Angular Velocity
\end{enumerate}
\vspace{1em}
\textbf{Actions:}
\begin{enumerate}
    \item +1 Force on Cart
    \item -1 Force on Cart
\end{enumerate}
\begin{table*}[h]
\centering
\renewcommand{\arraystretch}{1.3}
\hspace*{-1cm}
\begin{tabular}{|c|c|c|c|c|c|c|c|c|c|}
\hline
\multirow{2}{*}{Algorithm} & \multicolumn{4}{c|}{Hidden Layer Neurons(FC Layers)} & \multicolumn{5}{c|}{Learning Rate}\\ \cline{2-10} 
  & Actor & Value Critic & $w$ & $\hat{w}$ & Actor & Advantage Critic & Value Critic & $w$ & $\hat{w}$\\ \hline
\textbf{Deep AC}  & [16] & [64, 64] & - & - & 0.001 & - & 0.005 & - & - \\ \hline
\textbf{Deep NAC} & [16] & [64, 64] & - & - & 0.001 & 0.001 & 0.01 & - & - \\ \hline
\textbf{Deep OffAC} & [16] & [64, 64] & [16] & [16] & 0.0005 & - & 0.01 & 0.001 & 0.001 \\ \hline
\textbf{Deep OffNAC} & [16] & [64, 64] & [16] & [16] & 0.0005 & 0.01 & 0.01 & 0.01 & 0.01 \\ \hline
\end{tabular}
\hspace*{-1cm}
\caption{Hyperparameters for Cart Pole}
\end{table*}
\subsubsection{Acrobot}
In the Acrobot task, we have two links attached to each other. One end of the first link is attached by an unactuated joint to the wall and the other end is attached to the second link by an actuated joint. Episode starts with the links hanging downwards. The task is to swing the lower end of second link at a height atleast the length of one link above the un-actuated joint marked by a black line in minimum number of timesteps by applying torque on the actuated joint. Both the links can swing freely and can pass by each other. A \textbf{reward of -1} is received at each timestep until the episode ends. The maximum length of episode is \textbf{500 timesteps} and the maximum reward that can be obtained in an episode lies in the range between $-90$ to $-80$.
\begin{center}
\includegraphics[width = 8cm]{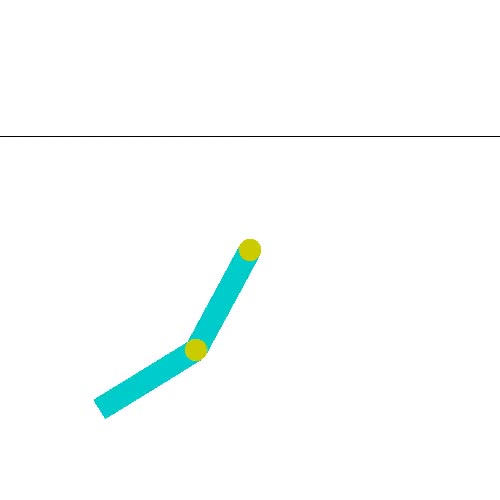}
\end{center}
\textbf{State Space:}
\begin{enumerate}
    \item $cos(\theta_{1})$
    \item $sin(\theta_{1})$
    \item $cos(\theta_{2})$
    \item $sin(\theta_{2})$
    \item Angular velocity of the first link
    \item Angular velocity of second link with respect to first link
\end{enumerate}
$\theta_{1}$ is the angle made by first link with respect to vertically downwards direction. $\theta_{2}$ is the angle made by second link with respect to first link.
\newline
\vspace{1em}
\textbf{Actions:}
\begin{enumerate}
    \item +1 Torque on actuated joint
    \item 0 Torque on actuated joint
    \item -1 Torque on actuated joint
\end{enumerate}
\begin{table*}[h]
\centering
\renewcommand{\arraystretch}{1.3}
\hspace*{-1cm}
\begin{tabular}{|c|c|c|c|c|c|c|c|c|c|}
\hline
\multirow{2}{*}{Algorithm} & \multicolumn{4}{c|}{Hidden Layer Neurons(FC Layers)} & \multicolumn{5}{c|}{Learning Rate}\\ \cline{2-10} 
  & Actor & Value Critic & $w$ & $\hat{w}$ & Actor & Advantage Critic & Value Critic & $w$ & $\hat{w}$\\ \hline
\textbf{Deep AC}  & [32] & [32, 32] & - & - & 0.0005 & - & 0.001 & - & - \\ \hline
\textbf{Deep NAC} & [32] & [32, 32] & - & - & 0.0001 & 0.001 & 0.005 & - & - \\ \hline
\textbf{Deep OffAC} & [32] & [32, 32] & [16] & [16] & 0.0001 & - & 0.005 & 0.0001 & 0.0001 \\ \hline
\textbf{Deep OffNAC} & [32] & [32, 32] & [16] & [16] & 0.00005 & 0.0001 & 0.005 & 0.0001 & 0.0001 \\ \hline
\end{tabular}
\hspace*{-1cm}
\caption{Hyperparameters for Acrobot}
\end{table*}
\subsubsection{Mountain Car}
In this task, a car is positioned on a one-dimensional track positioned between two mountains. Episode starts with car in between the mountains. The task is to drive the car to the top of right mountain which is marked by a flag in minimum number of timesteps using car's engine. The car's engine is not strong enough to scale the mountain in a single pass. A \textbf{reward of -1} is received at each timestep until the episode ends. The maximum length of episode is \textbf{10000 timesteps} and the maximum reward that can be obtained in an episode lies in the range $-110$ to $-120$.
\newline
\vspace{1em}
\begin{center}
\includegraphics[width = 8cm]{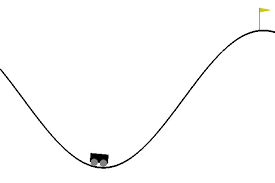}
\end{center}
\textbf{State Space:}
\begin{enumerate}
    \item Car Position
    \item Car Velocity
\end{enumerate}
\vspace{1em}
\textbf{Actions:}
\begin{enumerate}
    \item Accelerate to the left
    \item Don't Accelerate
    \item Accelerate to the right
\end{enumerate}
\begin{table}[h]
\centering
\renewcommand{\arraystretch}{1.3}
\hspace*{-1cm}
\begin{tabular}{|c|c|c|c|c|c|c|c|c|c|}
\hline
\multirow{2}{*}{Algorithm} & \multicolumn{4}{c|}{Hidden Layer Neurons(FC Layers)} & \multicolumn{5}{c|}{Learning Rate}\\ \cline{2-10} 
  & Actor & Value Critic & $w$ & $\hat{w}$ & Actor & Advantage Critic & Value Critic & $w$ & $\hat{w}$\\ \hline
\textbf{Deep AC}  & [32] & [32, 32] & - & - & 0.001 & - & 0.005 & - & - \\ \hline
\textbf{Deep NAC} & [32] & [32, 32] & - & - & 0.00001 & 0.0001 & 0.005 & - & - \\ \hline
\textbf{Deep OffAC} & [32] & [32, 32] & [16] & [16] & 0.0001 & - & 0.005 & 0.01 & 0.01 \\ \hline
\textbf{Deep OffNAC} & [32] & [32, 32] & [16] & [16] & 0.000001 & 0.0001 & 0.005 & 0.01 & 0.01 \\ \hline
\end{tabular}
\hspace*{-1cm}
\\~\\
\hspace*{-2cm}
\begin{tabular}{|c|c|c|c|c|c|c|c|c|c|c|c|}
\hline
\multirow{2}{*}{Algorithm} & \multicolumn{4}{c|}{Hidden Layer Neurons(FC Layers)} & \multicolumn{5}{c|}{Learning Rate} & \multirow{2}{*}{$\lambda$}\\ \cline{2-10} 
  & Actor & Value Critic & $w$ & $\hat{w}$ & Actor & Advantage Critic & Value Critic & $w$ & $\hat{w}$ & \\ \hline
\textbf{Deep AC TD($\lambda$)}  & [32] & [32, 32] & - & - & 0.005 & - & 0.05 & - & - & 0.7\\ \hline
\textbf{Deep NAC  TD($\lambda$)} & [32] & [32, 32] & - & - & 0.0001 & 0.001 & 0.05 & - & - & 1 \\ \hline
\textbf{Deep OffAC TD($\lambda$)} & [32] & [32, 32] & [16] & [16] & 0.0001 & - & 0.005 & 0.01 & 0.01 & 0.7 \\ \hline
\textbf{Deep OffNAC TD($\lambda$)} & [32] & [32, 32] & [16] & [16] & 0.000001 & 0.0001 & 0.005 & 0.01 & 0.01 & 1 \\ \hline
\end{tabular}
\hspace*{-2cm}
\caption{Hyperparameters for Mountain Car}
\end{table}
\subsubsection{LunarLander}
This task consists of lander and a landing pad marked by two flags. Episode starts with the lander moving downwards due to gravity. The task is to land the lander safely using different engines available on lander with zero speed on the landing pad as quickly and fuel efficiently as possible. Fuel is infinite and landing outside the landing pad is also possible. Reward for moving from the top of the screen and landing on landing pad with zero speed is between \textbf{100 to 140 points}. This reward is taken back if lander moves away from landing pad. Each leg ground contact yields a reward of \textbf{10 points}. Firing main engine leads to a reward of \textbf{-0.3 points} in each frame. Firing the side engine leads to a reward of \textbf{-0.03 points} in each frame. An additional reward of \textbf{-100 or +100 points} is received if the lander crashes or comes to rest respectively which also leads to end of the episode. The maximum length of episode is \textbf{1000 timesteps} and the maximum reward that can be obtained in an episode lies in the range $+150$ to $+200$.
\vspace{1em}
\begin{center}
\includegraphics[width = 8cm]{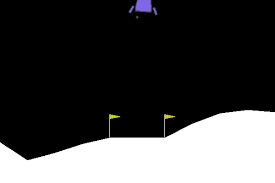}
\end{center}
\textbf{State Space:}
\begin{enumerate}
    \item Horizontal Position
    \item Vertical Position
    \item Horizontal Velocity
    \item Vertical Velocity
    \item Angle
    \item Angular Velocity
    \item Left Leg Contact
    \item Right Leg Contact
\end{enumerate}
\vspace{1em}
\textbf{Actions:}
\begin{enumerate}
    \item Do Nothing
    \item Fire Main Engine
    \item Fire Left Engine
    \item Fire Right Engine
\end{enumerate}
\begin{table}[h]
\centering
\renewcommand{\arraystretch}{1.3}
\hspace*{-1cm}
\begin{tabular}{|c|c|c|c|c|c|c|c|c|c|}
\hline
\multirow{2}{*}{Algorithm} & \multicolumn{4}{c|}{Hidden Layer Neurons(FC Layers)} & \multicolumn{5}{c|}{Learning Rate}\\ \cline{2-10} 
  & Actor & Value Critic & $w$ & $\hat{w}$ & Actor & Advantage Critic & Value Critic & $w$ & $\hat{w}$\\ \hline
\textbf{Deep AC}  & [16] & [64, 64] & - & - & 0.001 & - & 0.005 & - & - \\ \hline
\textbf{Deep NAC} & [16] & [64, 64] & - & - & 0.0001 & 0.001 & 0.005 & - & - \\ \hline
\textbf{Deep OffAC} & [128] & [128, 128] & [16] & [16] & 0.001 & - & 0.01 & 0.0001 & 0.0001 \\ \hline
\textbf{Deep OffNAC} & [128] & [128, 128] & [16] & [16] & 0.00001 & 0.005 & 0.01 & 0.001 & 0.001 \\ \hline
\end{tabular}
\hspace*{-1cm}
\caption{Hyperparameters for Lunar Lander}
\end{table}
\newpage
\subsection{Natural Actor-Critic with TD($\lambda$) Critic}
In this section, we propose a family of algorithms parameterized by $\lambda$, where we replace TD(0) prediction used in the value critic by the TD($\lambda$) prediction \cite{sutton1998introduction}. 
We refer to the new family of algorithms as ``Deep NAC($\lambda$)''. The parameter $\lambda$ helps in trading the bias and variance in the value function estimation of the policy. By choosing a suitable value of $\lambda$ (based on nature of the task), we can improve the prediction of the value function. This in turn can help the actor to compute a better policy. The complete description of the on-policy algorithm is provided in Algorithm \ref{alg:NAC-lam} and the off-policy algorithm is provided in Algorithm \ref{alg:offNAC-lam}.


\begin{algorithm}[h!]
\caption{On-Policy Deep Natural Actor-Critic with TD($\lambda$) prediction (Deep NAC($\lambda$))}\label{alg:NAC-lam}
\textcolor{black}{
\hspace*{\algorithmicindent}\textbf{Input}: $0 \leq \lambda \leq 1$.  \\
\hspace*{\algorithmicindent}Initialize the policy network parameter $\theta$. \\
\hspace*{\algorithmicindent}Initialize the value Function network parameter $\psi$. \\
\hspace*{\algorithmicindent}Initialize the advantage value function parameter $x$. \\
\begin{algorithmic}[1]
\For{$n = 0,\ldots,\infty$}
\State Initialise $s \sim d_0(.)$
\State $z \xleftarrow{} 0$
\While{the trajectory has not terminated}
\State Obtain an action $a \sim \pi_\theta(s,.)$.
\State Obtain next state and reward from the environment $(s',r) \sim \mathcal{E}$.
\State $z \xleftarrow{} \gamma \lambda z + \nabla_{\psi}V_\psi(s)  $ 
\State $\psi \xleftarrow{} \psi + \alpha_n \big(r+ \gamma V_{\psi}(s') - V_{\psi}(s)\big)z$
\State $x \xleftarrow{} x + \alpha_n \big(r + \gamma V_{\psi}(s') - V_{\psi}(s) - x^T\nabla_{\theta}\log \pi_{\theta_t}(s,a)\big)\nabla_{\theta}\log \pi_{\theta}(s,a) $
\State $\theta \xleftarrow{} \theta + \beta_n x$
\State $s \xleftarrow{} s'$
\EndWhile
\EndFor
\end{algorithmic}
}
\end{algorithm}
\begin{algorithm}[h!]
\caption{Off-Policy Deep Natural Actor-Critic with TD($\lambda$) prediction (Deep OffNAC($\lambda$))}\label{alg:offNAC-lam}
\textcolor{black}{
\hspace*{\algorithmicindent}\textbf{Input}: $0 \leq \lambda \leq 1$.  \\
\hspace*{\algorithmicindent}$\mu:$ Behavior policy. \\
\hspace*{\algorithmicindent}Initialize the policy network parameter $\theta$. \\
\hspace*{\algorithmicindent}Initialize the value function network parameter $\psi$. \\
\hspace*{\algorithmicindent}Initialize the advantage value function parameter $x$. 
\begin{algorithmic}[1]
\For{$n = 0,\ldots,\infty$}
\State Initialize $s \sim d_0(.)$
\State $z \xleftarrow{} 0$
\While{the trajectory has not terminated}
\State Estimate $\hat{w}_\theta$ and $w_{\theta}$ using Algorithms 1 and 2, respectively, of \cite{liu2018breaking}.
\State Obtain an action $a \sim \mu(s,.)$.
\State Obtain next state and reward from the environment $(s',r) \sim \mathcal{E}$.
\State Set $\rho(s,a) = \frac{\pi_{\theta}(s,a)}{\mu(s,a)}$.
\State $z \xleftarrow{} \gamma \lambda z + \nabla_{\psi}V_\psi(s)  $ 
\State $\psi \xleftarrow{} \psi + \alpha_n \Big( \hat{w}_{\theta}(s)\rho(s,a)\big(r+ \gamma V_{\psi}(s') - V_{\psi}(s)\big) z\Big)$
\State $x \xleftarrow{} x + \alpha_n \Big(w_{\theta}(s) \rho(s,a) \big(r + \gamma V_{\psi}(s') - V_{\psi}(s) - x^T\nabla_{\theta}\log \pi_{\theta_t}(s,a)\big)\nabla_{ \theta}\log \pi_{\theta}(s,a)\Big) $
\State $\theta \xleftarrow{} \theta + \beta_n x$
\State $s \xleftarrow{} s'$
\EndWhile
\EndFor
\end{algorithmic}
}
\end{algorithm}

\end{document}